\theoremstyle{plain}
\newtheorem{theorem}{Theorem}
\newtheorem{lemma}{Lemma}
\theoremstyle{definition}
\newtheorem*{remark*}{Remark}
\numberwithin{equation}{section}
\newcommand{\expect}[1]{\mathbb{E}\left[ #1 \right]}
\newcommand{\var}{\mathsf{var}}
\newcommand{\norm}[1]{\left\|{#1} \right\|}
\newcommand{\diag}{{\rm diag}}
\definecolor{darkred}{RGB}{100,0,0}
\definecolor{darkgreen}{RGB}{0,100,0}
\definecolor{darkblue}{RGB}{0,0,150}
\definecolor{red}{RGB}{255,0,0}
\newcommand{\vct}[1]{\bm{#1}}
\newcommand{\mtx}[1]{\bm{#1}}
\newcommand{\transp}{T}
\newcommand{\rank}{\operatorname{rank}}
\renewcommand{\hat}{\widehat}
\renewcommand{\tilde}{\widetilde}
\newcommand{\silent}[1]{}
\definecolor{xl}{RGB}{200,50,120}
\definecolor{yc}{RGB}{50,150,50}
\definecolor{jx}{RGB}{50,50,200}
\renewcommand{\v}[1]{ {#1}_1, {#1}_2, \cdots, {#1}_n }
\def \figpath {}
\begin{document}

\title{Clustering Degree-Corrected Stochastic Block Model with Outliers}

\date{}
\author{Xin Qian\thanks{Northwestern University, Industrial Engineering and Management Sciences, Ithaca, NY, 14850, USA, email: {\tt XinQian2017@u.northwestern.edu}.}
\ \ \ \ \ \ \ \
Yudong Chen\thanks{Cornell University, School of Operations Research and Information Engineering, Ithaca, NY, 14850, USA, email: {\tt yudong.chen@cornell.edu}.}   \ \ \ \ \ \ \ \
Andreea Minca\thanks{Cornell University, School of Operations Research and Information Engineering, Ithaca, NY, 14850, USA, email: {\tt acm299@cornell.edu}.}
}

\maketitle

\begin{abstract}
For the degree corrected stochastic block model in the presence of arbitrary or even adversarial outliers, we develop a convex-optimization-based clustering algorithm that includes a penalization term depending on the positive deviation of a node from the expected number of edges to other inliers. We prove that under mild conditions, this method achieves exact recovery of the underlying clusters.    Our synthetic experiments show that our algorithm performs well on heterogeneous networks, and in particular those with Pareto degree distributions, for which outliers have a broad range of possible degrees that may enhance their adversarial power. We also demonstrate that our method allows for recovery with significantly lower error rates compared to existing algorithms. 
\end{abstract}


\section{Introduction}
\label{sec:intro}

Clustering nodes in a complex network is one of the major challenges in network science. Various aspects of this problem have been studied by researchers across different fields including computer science, statistics, operation research,  probability and physics; for a partial list of work in this line, see
\cite{ABFX2008,BC2009,bordenave,chen2012sparseclustering,CNM2004,Cordon01,DHM2004,DEFI06,FB2007,fortunato2010community,Vershynin14,LR2013,massoulie,NG2004,NS2001,RCY2011,Shamir11budget,yi2012semi,ZLZ14}.

A variety of clustering algorithms have been developed,  such as modularity maximization~\citep{Newman2006modularity,KN2011}, graph-cut based methods~\citep{Cordon01,bollobas2004maxcut},
model/likelihood-based methods~\citep{ZLZ2012,LLV14}, spectral clustering~\citep{McSherry2001,shi2000normalizedcut,NgJordanWeiss}, hierarchical clustering
methods~\citep{balcangupta}, and more recently algorithms for dynamic networks \citep{hajek2018community}.

A main goal and driving force for the development of new algorithms is to tackle features of real world datasets, among those size and heterogeneity of the networks. In many cases, promising algorithms remain heuristic and  are not yet amenable to rigorous performance analysis.  On the theory side, obtaining provable performance guarantees is hindered by the fact that each  realistic feature added to the network model significantly increases the complexity of the analysis.

A recent line of research on clustering makes use of convex optimization to achieve both computational efficiency and statistical quality guarantees ~\citep{Jalali2011clustering,chen2012sparseclustering,ames2011plantedclique,Cai2014robust,Vershynin14}. By relaxing the original combinatorial problem into a semidefinite program (SDP), tractable clustering algorithms are developed which, under very general statistical settings, provably produce a high quality clustering. With a few exceptions such as~\cite{DHM2004,KN2011,CLX2017}, most previous  work in this area considers homogeneous networks, in which different nodes exhibit similar statistical properties.

In this paper, we focus on designing clustering algorithms applicable to heterogeneous networks with outliers, and on providing theoretical guarantees for them.  In particular, we would like to simultaneously capture the following three key features common in real world networks:
\begin{itemize}
    \item \textbf{Clustering/community structure:} Nodes may belong to several different groups, where nodes in the same group are more likely to connect to each other than those in different groups.
    \item \textbf{Heterogeneous degrees:} It is well-documented that real-world networks have heterogeneous node degrees. In particular, the degrees of nodes, even those in the same cluster, may exhibit heavy-tailed distribution~\citep{newmanbook}.
    \item \textbf{Arbitrary outliers:} There may exist a set of nodes that do not belong to any clusters and have arbitrary, even adversarial connection patterns.
\end{itemize}
Many networks have these properties. For example, the political blogs networks~\citep{Adamic2005} contain blogs that are mostly democratic or republican-oriented, some of whom have significantly more followers than others, but  there are blogs that are associated with neither political groups.
 Another important example is given by financial networks. These  consist of thousands of nodes at most but face a lot of heterogeneity. More importantly, the number of clusters is known and small. Clusters represent classical investment strategies, while some of the firms are multi-strategy and can be thought of as outliers ~\citep{guo2016}. Arbitrary outliers could also correspond to non-bank firms connected in financial derivatives markets \citep{peltonen2014}. Last and probably most important, for genomic networks, there are a variety of  co-expression networks that suffer from the presence of outliers. For example, cancer-type-specific co-expression networks are medium sized network of 10-20 thousand genes and network analysis and clustering can be useful to identify prognostic genes for some types of cancer, \cite{yang2014}.

Note that the outliers may be different in nature among themselves, so one cannot simply treat them as an additional cluster and apply a standard clustering algorithm. Indeed, many existing methods, such as spectral clustering, are known to perform poorly in the presence of outliers even in small datasets~\citep{Cai2014robust}. 

\subsection{Our Contributions}
\label{sec:contribution}

Motivated by the considerations above, we consider a network model that accounts for the combined features. The clustering algorithm we consider is based on Semidefinite Programming (SDP) relaxation of the Modularity Maximization approach. We introduce a novel regularization term that penalizes outliers with unusual connection patterns beyond those implied by the inlier heterogeneity.

Two existing works serve as the foundation of our analysis: the Stochastic Block Model (SBM) with Outliers  in \cite{Cai2014robust} and the Degree-Corrected Stochastic Block Model (DCSBM) in \cite{CLX2017}.
As usual, the complexities arising by adding multiple realistic features largely surpass complexities of handling any of these features alone. In particular, our analysis needs to address the following challenges:

(1) When inliers are homogeneous, a node with unusual degree can be immediately recognized as an outlier. Therefore, for an outlier to hide, it must have a degree that is similar to inliers' degree. This  limits significantly the power of the adversary. In real networks, however, a node with unusual degree may not be an outlier -- it can well belong to one of the clusters, and have very high or low degree simply because this node is more popular/unpopular than other nodes in the same cluster. In this sense, we are facing the more challenging problem where the outliers have more freedom and do not need to restrict their degrees. The ability of the outliers to select across a broad range of degrees (especially when the inliers' degree distribution is heavy-tailed) makes it crucial for an algorithm to look into the detailed connectivity patterns of the nodes (who connects to whom) rather than just their degrees (how many they connect to), even more so than SBM.

(2) We use the primal-dual witness approach. However, in proving the necessary bounds for the recovered solution, we need to bound separately the contributions from nodes with different degrees. The heterogeneous nature of the nodes' connectivity complicates the analysis on the distribution of edges.   We need to obtain sufficiently tight individual bounds and ensure the correct dependence on the individual degrees such that the aggregate bound is sufficiently strong. In contrast, a worse case bound in terms of the maximum degree would be too loose. 
Moreover, in the degree-corrected set-up, the definiteness of the adjacency matrix becomes worse and the homogeneous penalization on diagonal terms is not enough to recover the true clusters. We instead introduce a term that  depends on the degree of each node, namely it takes the form $ \alpha \text{diag} \{d^*\} $, where $d^*$ is depends both on the degree vector $ d $ and a control on the expected number of edges to other inliers.

By addressing these points, we provide theoretical guarantees for the exact recovery of the inlier nodes with high probability. In particular, we impose no assumptions on the outlier nodes other than their cardinality. We provide an explicit and non-asymptotic condition for exact recovery. Namely, we request that the density gap (difference between the intra- and inter-cluster edge density) must be larger than an expression based on the natural problem parameters, such edge densities, amount of degree heterogeneity, sizes of the clusters, and number of inliers/outliers.
We also give explicit conditions on the tuning parameters of the algorithm.
Surprisingly, the condition for recovery does not contain an ``nm" term as in \citep{Cai2014robust} and instead contains two terms in ``n" and respectively "$\sqrt{n log n}$".

The applicability of our model is to networks of several thousands of nodes.
These are medium sized networks, which may arise in various applications and are subject  to the real-world features described above.
We provide numerical results based on synthetic data for a network of size in the range $n=400$ to $n= 1000$, divided into $r = 2$ clusters and in the presence of a varying number of outliers, $m \in [10, 30]$.
The degrees are following a heavy-tailed Pareto distribution, with varying shape parameters.
We compare the misclassification rate for our algorithm to state of the art algorithms, such as spectral clustering~\citep{ZLZ14}, SCORE~\citep{Jin2012} and Cai-Li~\citep{Cai2014robust}. Our results significantly improve the quality of recovery.
Notably, the performance of the algorithm is relatively unhindered even under very heterogeneous degree distributions and in the presence of a large number of outliers. In contrast, other algorithms have a sharp increase in the misclassification rate in such settings.

\subsection{Notation}
\label{sec:notation}

Matrices are denoted by bold capital letters, vectors by bold lower-case letters, and scalars by normal letter. The notation $\mtx{X} \succeq \mtx{0} $ means the matrix $\mtx{X}$ is positive semidefinite (psd). For two matrices $\mtx{X}$ and $\mtx{Y}$ of the same dimension, we denote their trace inner product by by $\langle \mtx{X}, \mtx{Y} \rangle := \text{Trace}(\mtx{X}^T \mtx{Y})$, and we write $\mtx{X} \le \mtx{Y}$ if $X_{ij} \le Y_{ij}$ for all $i$ and $j$. For an integer $k$, let $[k] := \{1,2,\ldots, k\}.$ We use $\mtx{I}$ to denote the identity matrix, $\mtx{J}$ the matrix with all entries equal to 1, and $\diag(\vct{u}) $ the diagonal matrix whose $i$-th diagonal entry is $u_i$. We use notations like $c, c_0, C$ etc.\ to denote numerical constants independent of the other model parameters (in particular, the number of nodes $n$). Finally, for two quantities $x \equiv x_n$  and $y \equiv y_n$ that may depend on $n$, we write $x \asymp y$ if they are of the same order, that is,  there exist numerical constant $c_1$ and $c_2$ such that $ c_1 y \le x \le c_2 y$.
 
\section{Problem Setup}
\label{sec:setup}

We consider a \emph{Degree-Corrected Stochastic Block Model with Outliers}, which is a generative model for a random graph with underlying clustering structures.

In particular, the model involves a graph $ \mathcal{G} = ( {V}, \mtx{A} ) $. Here $ {V} = [N] = [n+m] $ is a set of $ N:=n+m $ vertices, where $ n $ inliers are partitioned into $ r $ unknown clusters $ C_1^{\star}, C_2^{\star}, \cdots,  C_r^{\star} $, and the other $m$ nodes are outliers. The adjacency matrix $ \mtx{A} \in \{0,1\}^{(n+m)\times(n+m)} $,  where  $ {A}_{ij} = 1 $ if and only if nodes $ i $ and $ j $ are connected, are generated randomly as follows. Each pair of distinct inliers $ i \in C_a^* $ and $ j \in C_b^* $ are connected by an undirected edge with probability $ \theta_i \theta_j B_{ab} $, independently of all others. Here the vector $ \mtx{\theta} = (\v{\theta} )^{\top} \in \mathbb{R}_{+}^{n}$ is referred to as the degree heterogeneity parameters of the nodes. The symmetric matrix $ \mtx{B} = (B_{ab}) \in \mathbb{R}_{+}^{r\times r} $ is called the connectivity matrix of the clusters,  and specifies the likelihood of connectivity of the inliers. The connections of the $ m $ outliers among each other and to the inliers are arbitrary; they may depend on the underlying clusters and the realization of edges between the inliers, and may even be chosen adversarially. 

Note that the above model is a generalization of several well-known models. When there are no outliers ($m=0$) and  $\theta_i \equiv 1$ is uniform, the model reduces to the classical SBM~\citep{holland1983stochastic}. If $m=0$ but $\theta_i$ is allowed to vary across $i$, it becomes the  standard DCSBM~\citep{DHM2004,KN2011}. Finally, when $\theta_i\equiv 1$ but $m$ may be non-zero, it coincides with the setting considered in~\cite{Cai2014robust}, i.e., the SBM with outliers.

For future development, it is convenient to write the adjacency matrix $ A$ in a block form according to the clustering structure
\begin{equation} 
\label{eqn; semiraw}
\mtx{A}=\mtx{P}\begin{bmatrix}
\mtx{K} & \mtx{Z} \\
\mtx{Z^\transp} & \mtx{W}\\
\end{bmatrix}\mtx{P}^T
=\mtx{P}\begin{bmatrix}
\mtx{K_{11}} & \cdots & \mtx{K_{1r}} & \mtx{Z_1}\\
\vdots  &  \ddots  & \vdots\ & \vdots\\
\mtx{K_{1r}^\transp}   & \cdots & \mtx{K_{rr}}  & \mtx{Z_r}  \\
\mtx{Z_1^\transp} & \cdots  & \mtx{Z_r^\transp}\ & \mtx{W}\\
\end{bmatrix}\mtx{P}^T,
\end{equation}
where the block matrices above have the following interpretations:
\begin{itemize}
\item $ \mtx{W} \in \{0,1\}^{m\times m} $ is a symmetric 0-1 matrix representing the connection within the outliers. Under our model, $\mtx{W}$ is arbitrary.
\item  $ \mtx{Z} \in \{0,1\}^{n\times m} $ is a 0-1 matrix representing the connection between inliers and outliers; in particular, $\mtx{Z_a}$ is the adjacency matrix between the outliers and the $a$-th inlier cluster. Under our model, $\mtx{Z}$ is arbitrary.

\item  $ \mtx{K} \in \{0,1\}^{n\times n} $ is a symmetric 0-1  matrix representing the connection between inliers. In particular, $\mtx{K_{ab}}$ is the adjacency matrix between the $a$-th and $b$-th clusters. Under our model, each entry of $\mtx{K_{ab}}$ is equal to 1 with probability $ \theta_i \theta_j B_{ab} $, independently of all others. 
\item $ \mtx{P} \in \{0,1\}^{N\times N}$ is an unknown  permutation matrix, in which there is a single 1 in each row and column while all other entries are 0. Under this permutation, the nodes are ordered according to the underlying structure of clusters and outliers.
\end{itemize}
For each $a\in[r]$, we denote the size of the $a$-th cluster  by  be $ l_a := |C^*_a| $. Note that $ n = \sum_{a=1}^{r} l_a$. Let $ l_{\min} = \min_{1\le a \le r}l_a $  be the minimum size of the clusters. We also introduce the vector of node degrees $ \mtx{d} = ( d_1, \cdots, d_{n+m} )^\top $, where  $ d_i := \sum\limits_{j=1}^{n+m} A_{ij} $.

For each candidate partition of the $ (n+m) $ inliers into several clusters, we may associate it with a \emph{partition matrix} $ \mtx{X} \in \{0, 1\}^{(n+m)\times(n+m)}$, such that $ X_{ij} = 1$ if and only if nodes $ i $ and $ j $ are assigned to the same cluster, with the convention that $ X_{ii} = 1 $. Ideally, we would like to find a partition matrix of the form
\begin{align}
\label{eq:good_partition}
\mtx{X} =
\mtx{P}^T
\begin{bmatrix}
\mtx{J}_{l_1} & \mtx{0} & \cdots & \mtx{0} & * \\
\mtx{0} & \mtx{J}_{l_2} & \cdots &\mtx{0} & * \\
\vdots  &  \vdots  & \ddots\ & \vdots &  \vdots \\
\mtx{0}   & \mtx{0} & \cdots & \mtx{J}_{l_r}  & *  \\
* & * & \cdots  & *  & *\\
\end{bmatrix}
\mtx{P}^T,
\end{align}
where $\mtx{J}_l$ denotes the $l$-by-$l$ all-one matrix, and $*$ denotes arbitrary entries. In other words, we want to correctly recover the cluster structure within the inliers, where cluster assignment of the outliers may be arbitrary.

Given a single realization of the resulting
random graph $ \mathcal{G} = ( V, \mtx{A} ) $, our goal is to recover the true inlier clusters $ \left\{C_a^*\right\}_{a=1}^{r} $, that is, to recover a partition matrix in the form of \eqref{eq:good_partition}.

\section{Algorithm: A Convex Relaxation Approach}
\label{sec:algorithm}

In this section, we provide the motivation and description of our algorithm, which can handle both degree heterogeneity and outliers.

We begin by recalling that a classical approach to clustering nodes in a network is \emph{modularity maximization}~\citep{Newman2006modularity}, which involves solving the optimization problem
\begin{equation}
\label{eq:mm}
\begin{aligned}
&\min_{\mtx{X} \in \mathbb{R}^{N\times N}} && \langle \mtx{X}, \;  \lambda \mtx{d}\mtx{d}^\transp - \mtx{A}\rangle
\\
&\text{subject to} && \mtx{X} \text{ is a partition matrix.}
\end{aligned}
\end{equation}
The negative of the objective of the above optimization problem is called \emph{modularity}, which is a measure of the quality of the candidate clustering $\mtx{X}$. This quality measure is derived and studied in depth in the work of~\cite{Newman2006modularity}, which shows that maximizing the modularity provides a natural and robust framework for finding a good clustering of the nodes. 

In general, the modularity optimization problem~\eqref{eq:mm} is intractable due to the need of searching over partition matrices, a non-convex and combinatorial constraint.
Replacing this constraint with a convex constraint, \cite{CLX2017} propose the following convex, SDP relaxation of modularity maximization:
\begin{equation}
\label{eq:cvxraw}
\begin{aligned}
&\min_{\mtx{X}\in \mathbb{R}^{N\times N}} && \langle \mtx{X}, \;  \lambda \mtx{d}\mtx{d}^\transp - \mtx{A}\rangle
\\
&\text{subject to} && \mtx{X}\succeq \mtx{0},
\\
&~              && \mtx{0} \leq \mtx{X} \leq \mtx{J},
\end{aligned}
\end{equation}
where we recall that $ \mtx{J} $ is the $ n \times n $ all-one matrix. They provide recovery guarantees for the above convex relaxation under DCSBM \emph{without} outliers.
On the other hand, to handle outliers in the classical SBM setting, \cite{Cai2014robust} propose a convex relaxation formulation that penalizes the diagonal entries of $ X $:
\begin{equation}
\label{eq:cvx}
\begin{aligned}
&\min_{\mtx{X}\in \mathbb{R}^{N\times N}} && \langle \mtx{X}, \; \alpha \mtx{I} + \lambda \mtx{J} - \mtx{A}\rangle
\\
&\text{subject to} && \mtx{X}\succeq \mtx{0},
\\
&~              && \mtx{0} \leq \mtx{X} \leq \mtx{J}.
\end{aligned}
\end{equation}
This formulation, however, is unable to handle DCSBM as it treats all nodes equally without considering the variation in their degrees.

\textbf{Our Algorithm:} Building on the formulations~\eqref{eq:cvxraw} and~\eqref{eq:cvx}, we propose a convex relaxation formulation that accounts for both degree heterogeneity and outliers. Given that  outliers can have any degree, we need to incorporate a larger penalization term on diagonal entries of $ X $. In particular, we penalize a potential outlier whose degree exhibits unusual behavior beyond the normal variation implied the DCSBM.

To be more specific, our algorithm depends on several quantities of the model. For each true cluster $ C_a^* $, we define the aggregate degree heterogeneity parameter as
\[
G_a := \sum_{i\in C_a^*} \theta_i.
\]
Consequently, the expected number of edges from each node $i$  to other inliers is equal to $ \theta_i H_a$, where
\[H_a := \sum_{b=1}^{r} G_b B_{ab}.
\]
Our diagonal penalization term is based on the quantity $ d_i^* := \max\left\{ d_i, H^+\right\} $, where $ H^+ := \max_{1\le a\le r} H_a $. 
With the above notation, we consider the following convex relaxation formulation
\begin{equation}
\label{eq:cvxnew}
\begin{aligned}
&\min_{\mtx{X}} && \langle \mtx{X}, \; \alpha \cdot \diag(\mtx{d^{*}}) + \lambda \mtx{d}\mtx{d^\transp} - \mtx{A}\rangle
\\
&\text{subject to} && \mtx{X}\succeq \mtx{0},
\\
&~              && \mtx{0} \leq \mtx{X} \leq \mtx{J}.
\end{aligned}
\end{equation}
One can see that our formulation is a convex relaxation of modularity maximization with an additional node-dependent regularization term on the diagonal entries of $\mtx{X}$. In particular, we penalize each node $i$ differently with the weight $d_i^*$, which is an upper bound of the node's degree $ d_i $ that also captures the positive deviation from the expected connections to other inliers. 
The tuning parameter $ \alpha $ controls the strength of this diagonal penalization, and should be chosen to be sufficiently large.
In our theoretical results in the next section, we provide guidance on how to choose $ \alpha $; in particular, we need $ \alpha \ge c_1 \frac{m}{H^-} $, where $ H^- := \min_{1\le a\le r} H_a $ and $c_1$ is a numerical constant.


\section{Theoretical Guarantees}

In this section, we provide theoretical guarantees on the performance of the convex optimization approach~(\ref{eq:cvxnew}) under the setting of DCSBM with Outliers described in Section~\ref{sec:setup}. Before stating our main theorem, we introduce several quantities of interest, and record some useful relationships between them. 

\subsection{Additional Notations and Preliminary Facts}

We first provide a summary of the notations used in the sequel. Without loss of generality, assume that the first $n$ nodes, $\{1,2,\ldots, n\}$, are inliers.
\begin{itemize}
	\item $ p^+ := \max_{1\le a\le r} B_{aa},$ and $ ~p^- := \min_{1\le a\le r} B_{aa} $
	\item $ q^+ := \max_{1\le a < b\le r} B_{ab},$ and $ ~q^- := \min_{1\le a<b\le r} B_{ab} $
	\item $ \theta_{\min} := \min_{1\le i \le n} \theta_i $.
	\item $ G_a := \sum\limits_{i\in C_a^*} \theta_i$, $ H_a := \sum\limits_{1\le b \le r} B_{ab}G_b $,  $ H^+ := \max\limits_{1\le a \le r} H_a$, and $ H^- := \min\limits_{1\le a \le r} H_a$, as defined previously.
	\item $ \bar{\theta} := \sum\limits_{i=1}^{n} \theta_i / n$, $ \bar{\theta}_a := \frac{G_a}{l_a} $, $ \bar{\theta}_{\min} = \min\limits_{1\le a \le r} \bar{\theta}_a $, and $G_{\min} = \min_{1\le a\le r} G_a .$
	\item $ f_i := \theta_i H_a $, which is the expected degree of $ i $-th vertex with inliers.
	\item $ \tilde{d}_a := \sum\limits_{i \in C_a^*} d_i $, which is the sum of the degrees of nodes in cluster $ C_a^* $.
	\item  For a matrix $\mtx{T}$ and each pair $ 1 \le a, b \le r$, we use $ \mtx{M}_{(a, b)} \in \mathbb{R}^{l_a \times l_b} $ to denote the submatrix of $ \mtx{M} $ with entries indexed by $ C_a^* \times C_b^*$.
\end{itemize}
By definition, it is clear that 
$$ G_a \ge l_{\min} \bar{\theta}_{\min} 
\qquad \text{and} \qquad 
n \bar{\theta} q^- \le H_a \le n \bar{\theta} p^+ $$
for all $ 1 \le a \le r.$
Note that the expected degrees of inliers is determined by the quantities $\theta_i $ and $ H_a $; in fact, we have $ \mathbb{E} \sum_{1\le j \le n} A_{ij} = f_i := \theta_i H_a $.


\subsection{Guarantee for Perfect Clustering}
We are now ready to state the main result of the paper. Recall that our goal is to find a partition matrix of the form~\eqref{eq:good_partition} given the adjacency matrix $\mtx{A}$, that is, to recover the cluster structure of the inliers from a single realization of a graph generated from DCSBM with outliers. The theorem below, proved in Section~\ref{sec:proof_thm:exact}, provides sufficient conditions for when our convex relaxation approach~\eqref{eq:cvxnew} achieves this goal.

\begin{theorem}
	\label{thm:exact}
	Assume that $ p^+ \asymp  p^- \asymp  q^+ \asymp q^- $  and $ \bar{\theta}  \asymp \bar{\theta}_{\min} $. 
	Suppose that $ q^- \ge \frac{m}{l_{\min}} $ and
	\begin{equation} \label{eqn: delta}
	\delta \ge c_0 \left\{ \sqrt{\frac{p^+ \log n}{\theta_{\min} G_{\min}}} + \frac{\alpha n \bar{\theta} p^+}{G_{\min}} + \frac{\theta_{\max} \sqrt{p^+ n \log n}}{G_{\min}\theta_{\min}} + \frac{\log n}{G_{\min}\theta_{\min}} + \frac{m\sqrt{r}}{\theta_{\min} G_{\min}} + \frac{m}{\alpha \theta_{\min} G_{\min}} \right\}
	\end{equation}
	for some $\delta>0$, and that the tuning parameters in~\eqref{eq:cvxnew} satisfy 
	\begin{equation} \label{eqn:lambda}
	\max\limits_{1\le a < b \le r} \frac{B_{ab}+\delta}{H_a H_b} < \lambda < \min\limits_{1 \le a \le r} \frac{B_{aa}-\delta}{H_a^2}
	\end{equation} 
	and 
	\begin{equation} \label{eqn:alpha}
	\alpha \ge c_1 \frac{m}{H^-},
	\end{equation}
	where $c_0, c_1>0$ are sufficiently large numerical constants.
	Then with probability at least $ 1 - \frac{1}{n} - \frac{2r}{n^2} -\frac{cr}{l^4_{\min}} $ for some constant $ c $, any solution $ \mtx{\hat{X}} $ to the semidefinite program  (\ref{eq:cvxnew}) must be of the form 
	\begin{equation}
	\mtx{\hat{X}} = \mtx{P}	\begin{bmatrix}
	\mtx{J_{l_1}} &&&\mtx{\hat{Z_1}}\\
	&\ddots&&\vdots\\
	&&\mtx{J_{l_r}}&\mtx{\hat{Z_r}}\\
	\mtx{\hat{Z_1}}^T&\cdots&\mtx{\hat{Z_r}}^T&\mtx{\hat{W}}
	\end{bmatrix}\mtx{P}^T,
	\end{equation}
	where $ \mtx{P} $ is a permutation matrix.
\end{theorem}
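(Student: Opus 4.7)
The plan is to apply the primal-dual witness method standard for SDP-based recovery. I first fix the candidate primal solution $\mtx{X}^{*}$ to be the block matrix that places $\mtx{J}_{l_a}$ on the $a$-th inlier diagonal block, zero on all inlier off-diagonal blocks, and zero on every entry touching an outlier. The goal is then to construct dual multipliers $\mtx{Y}\succeq \mtx{0}$, $\mtx{Q}\ge \mtx{0}$ and $\mtx{\Lambda}\ge \mtx{0}$ associated respectively with the constraints $\mtx{X}\succeq \mtx{0}$, $\mtx{X}\ge \mtx{0}$ and $\mtx{X}\le \mtx{J}$, satisfying the stationarity identity
\[
\mtx{Y} \;=\; \alpha\,\diag(\mtx{d}^{*}) + \lambda\,\mtx{d}\mtx{d}^{\transp} - \mtx{A} + \mtx{Q} - \mtx{\Lambda}
\]
together with the slackness conditions $\langle \mtx{Y},\mtx{X}^{*}\rangle=\langle \mtx{Q},\mtx{X}^{*}\rangle=\langle \mtx{\Lambda},\mtx{J}-\mtx{X}^{*}\rangle=0$. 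Since $\mtx{X}^{*}$ equals one on within-cluster inlier entries and zero elsewhere, slackness forces $\mtx{\Lambda}$ to be supported only on within-cluster inlier positions and $\mtx{Q}$ to vanish there.

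Next, I would exploit the identity $\mtx{Y}\mtx{X}^{*}=\mtx{0}$, which reduces to $\sum_{j\in C_b^{*}}Y_{ij}=0$ for every row $i$ and every inlier cluster $C_b^{*}$. Reading $Y_{ij}$ off the stationarity equation and solving these linear constraints yields closed-form expressions for the nonzero entries of $\mtx{\Lambda}$ and $\mtx{Q}$ in terms of $d_i$, the cluster aggregate $\tilde d_b$, and cluster-indexed row sums $\sum_{k\in C_b^{*}}A_{ik}$. Verification then splits into two subtasks: (i) establish $\mtx{\Lambda},\mtx{Q}\ge \mtx{0}$ entrywise with high probability, and (ii) prove $\mtx{Y}\succeq \mtx{0}$. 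For (i), I would concentrate $d_i$ around $f_i=\theta_iH_a$, $\tilde d_b$ around $G_bH_b$, and cluster-indexed row sums of $\mtx{A}$ around their means via Bernstein's inequality, combined with the deterministic observation that outliers contribute at most $m$ edges per row. The inequality~\eqref{eqn: delta} is calibrated precisely to dominate each of these fluctuations: the first three terms absorb the DCSBM noise (degree concentration and spectral concentration of $\mtx{A}-\mathbb{E}\mtx{A}$), the $\log n/(G_{\min}\theta_{\min})$ term handles the heavy-tail event of Bernstein, and the outlier perturbation contributes $m\sqrt{r}/(\theta_{\min}G_{\min})$ and $m/(\alpha\theta_{\min}G_{\min})$, the latter sharpening precisely when~\eqref{eqn:alpha} is saturated. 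The window for $\lambda$ in~\eqref{eqn:lambda} ensures $\lambda d_id_j$ exceeds $\mathbb{E}A_{ij}$ for between-cluster pairs and falls below it for within-cluster pairs, after accounting for the concentration gap $\delta$; the condition $q^{-}\ge m/l_{\min}$ guarantees the density gap survives the outlier background.

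The main obstacle is (ii): proving $\mtx{Y}\succeq \mtx{0}$, and in fact strict positivity on the orthogonal complement of $\mathrm{span}\{\mathbf{1}_{C_a^{*}}\}_{a=1}^{r}$ inside the inlier subspace. I would replace the random quantities $d_i,\tilde d_b$ and $\mtx{A}$ in $\mtx{Y}$ by their DCSBM expectations to obtain an idealized $\tilde{\mtx{Y}}$, then control $\mtx{Y}-\tilde{\mtx{Y}}$. Calibration of $\lambda$ via~\eqref{eqn:lambda} ensures $\tilde{\mtx{Y}}$ is PSD on the inlier block with kernel spanned exactly by $\{\mathbf{1}_{C_a^{*}}\}$, while the residual is bounded in operator norm by applying matrix Bernstein to $\mtx{A}-\mathbb{E}\mtx{A}$; heterogeneity of $\theta_i$ forces per-entry variance bounds, producing the term $\theta_{\max}\sqrt{p^{+}n\log n}/(G_{\min}\theta_{\min})$ rather than a cruder worst-case bound driven by $\theta_{\max}^{2}$. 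The outlier rows and columns of $\mtx{Y}$ form a low-rank perturbation of rank at most $2m$ whose worst-case magnitude is controlled by the diagonal penalty $\alpha\,\diag(\mtx{d}^{*})$: because $d_i^{*}\ge H^{+}$ for every $i$, the condition $\alpha\ge c_1 m/H^{-}$ in~\eqref{eqn:alpha} guarantees this perturbation stays dominated on the relevant subspace. Finally, uniqueness of the optimizer up to the form~\eqref{eq:good_partition} follows from strict complementary slackness: verifying $Q_{ij}>0$ strictly on every between-cluster inlier pair and $\Lambda_{ij}>0$ strictly on every within-cluster inlier pair forces any optimal $\hat{\mtx{X}}$ to agree with $\mtx{X}^{*}$ on the inlier block, with the desired probability coming from a union bound over the concentration events.
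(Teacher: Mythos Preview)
Your high-level framework (primal-dual witness, concentration of degrees and of $\mtx{A}-\mathbb{E}\mtx{A}$, strict complementary slackness for uniqueness on the inlier block) matches the paper, but there is a genuine gap in the very first move: your choice of candidate primal $\mtx{X}^{*}$ is wrong.

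You set $\mtx{X}^{*}$ to be zero on every entry touching an outlier. In general this matrix is \emph{not} an optimizer of~\eqref{eq:cvxnew}. The outlier edges are arbitrary, so for an outlier $j$ heavily connected to some cluster $C_a^{*}$ the objective can strictly decrease by pushing $X_{ij}$ toward $1$ for $i\in C_a^{*}$; the diagonal penalty $\alpha d_{n+j}^{*}$ and the $\lambda d_i d_{n+j}$ terms do not automatically kill this. Since the KKT conditions you write down are \emph{necessary} for optimality, if your $\mtx{X}^{*}$ is suboptimal there simply do not exist multipliers $(\mtx{Y},\mtx{Q},\mtx{\Lambda})$ with the sign and slackness properties you need, and the construction collapses before any concentration argument begins. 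Concretely, your slackness forces $\mtx{\Lambda}$ to vanish on every outlier entry, so on the outlier block you must have $\mtx{Y}=\alpha\,\diag(\mtx{d}^{*}_{(r+1)})+\lambda\mtx{d}_{(r+1)}\mtx{d}_{(r+1)}^{\transp}-\mtx{W}+\mtx{Q}$ with $\mtx{Q}\ge 0$; but the row-sum constraints $\mtx{Y}\mathbf{1}_{C_a^{*}}=0$ coming from $\mtx{Y}\mtx{X}^{*}=\mtx{0}$ give you no leverage on outlier--outlier entries at all, and for adversarial $\mtx{W},\mtx{Z}$ you cannot force $\mtx{Q}\ge 0$ there.

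The paper resolves this by \emph{not} prescribing the outlier part of $\mtx{X}^{*}$ a priori. Instead it introduces an auxiliary $m$-dimensional convex program (their Lemma~1) whose optimizers $\mtx{x}_1,\dots,\mtx{x}_r\in\mathbb{R}^m$ pin down the outlier rows/columns of the candidate: $\mtx{X}^{*}=\mtx{V}^{*}\mtx{V}^{*\transp}$ with the last $m$ rows of $\mtx{V}^{*}$ equal to $[\mtx{x}_1\;\cdots\;\mtx{x}_r]$. The KKT conditions of this auxiliary problem supply exactly the dual pieces ($\mtx{\beta}_a$ and the diagonal $\mtx{\Xi}$) needed to make the outlier block of the certificate work; in particular they guarantee $\mtx{\Lambda}\mtx{V}^{*}=\mtx{0}$ automatically on the outlier rows. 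Only after this is the matrix $\mtx{E}=\alpha\,\diag(\mtx{d}^{*})+\lambda\mtx{d}\mtx{d}^{\transp}-\mtx{A}$ decomposed into four pieces $\mtx{\Psi}+\mtx{\Phi}+\mtx{\Gamma}+\mtx{\Lambda}$, and the PSD part $\mtx{\Lambda}\succeq\mtx{0}$ is established not by matrix Bernstein alone but by passing to a modified matrix $\tilde{\mtx{\Lambda}}$ (differing from $\mtx{\Lambda}$ by rank-$r$ terms in the $\mtx{\theta}$ directions) and applying Gershgorin row-by-row, which is where the $d_i^{*}=\max\{d_i,H^{+}\}$ truncation and the specific form of $\alpha\ge c_1 m/H^{-}$ are actually used. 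Your plan would need to incorporate this auxiliary-problem step (or an equivalent device to determine the outlier part of the optimum) before the rest of the argument can proceed.
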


Theorem \ref{thm:exact} guarantees that any optimal solution $ \mtx{\hat{X}} $ satisfies the property that for any inliers $ i $ and $ j $, $ \hat{X}_{ij} = 1 $ if nodes $ i $ and $ j $ are in the same true cluster and $ \hat{X}_{ij} = 0 $ otherwise. In other words, $\mtx{\hat{X}} $  correctly recovers the true cluster structure of the inliers.
Since we impose no assumption on the outliers, there is in general no hope of determining how outliers would be clustered. Consequently, the theorem does not provide guarantees on the values of the elements on the last $ m $ rows and $ m $ columns of $ \mtx{\hat{X}} $. Nevertheless, the theorem ensures that the presence of the outliers does no hinder the clustering of the inliers.  

Once we obtain the solution $ \mtx{\hat{X}} $ as above, we can extract from it an explicit clustering of the inliers by treating each row of $\mtx{\hat{X}} $ as a point in $\mathbb{R}^{n+m}$ and running the $k$-means algorithm; see~\cite{Cai2014robust,CLX2017} for the details. 

The results in Theorem~\ref{thm:exact} are non-asymptotic and valid for finite $n$; in particular, the probability for recovery has the form $ 1 - O(\frac{1}{n}) $, which is the same as in \citet[Theorem 3.1]{Cai2014robust} and \citet[Theorem 3.3]{CLX2017}. Let us parse the recovery condition in Theorem~\ref{thm:exact} under the simplified setting with $p^+=p^-=p$, $q^+ = q^- = q$, and $l_a = l_{\min}, \forall 1\le a \le r$; that is, the connectivity matrix $\mtx{B}$ has diagonal entries all equal to $p$ and off-diagonal entries all equal to $q$, and all clusters have the same size $l_{\min}$
\begin{itemize}
    \item First consider the special case where the node degrees are uniform (no degree heterogeneity); that is, $\theta_i = 1,\forall 1\le i \le n.$ In this case, noting that $p\asymp q$ by assumption and performing some algebra, we find that the conditions~\eqref{eqn: delta}--\eqref{eqn:alpha} simplify to
    \begin{gather*}
        \delta \ge c_0 \left\{ \sqrt{\frac{p \log n}{ l_{\min}}} + \frac{\alpha n p}{l_{\min}} + \frac{\sqrt{n  q \log n}}{l_{\min}}  + \frac{m\sqrt{r}}{ l_{\min}} + \frac{m}{\alpha  l_{\min}} \right\}, \\
         \frac{q+\delta}{f^2} < \lambda < \frac{p-\delta}{f^2}, \\
        \alpha \ge c_1 \frac{m}{f},
    \end{gather*}
    where $f := qn + (p-q) l_{\min} $ is the expected inlier degree. Up to a rescaling by $f$, these conditions match those in \citet[Theorem 3.1]{Cai2014robust} under the same setting.
    \item Next consider the special case where there is no outliers; that is, $m = 0$. In this case, we may take $\alpha = 0$; moreover, by again noting that $p\asymp q$ and performing some algebra, we find that the conditions~\eqref{eqn: delta}--\eqref{eqn:lambda} become
    \begin{gather*} 
    	\delta \ge c_0 \left\{ \sqrt{\frac{p \log n}{\theta_{\min} G_{\min}}} + \frac{\sqrt{qn \log n}}{G_{\min}} \cdot \frac{\theta_{\max} }{\theta_{\min} } \right\} , \\
    	\max\limits_{1\le a < b \le r} \frac{q+\delta}{H_a H_b} < \lambda < \min\limits_{1 \le a \le r} \frac{p-\delta}{H_a^2}.
	\end{gather*} 
    These conditions match those in~\citet[Theorem 3.3]{CLX2017} except for an addition term $\frac{\theta_{\max} }{\theta_{\min} }$ in the gap condition for $\delta$.
\end{itemize}
Therefore, in the special cases of SBM with oultiers and DCSBM, we see that Theorem~\ref{thm:exact} is strong enough to essentially recover the results in \cite{Cai2014robust,CLX2017} as corollaries. Moreover, Theorem~\ref{thm:exact} strictly generalizes their results as it is applicable in the setting with both outliers and degree heterogeneity.

\section{Experiments}

In this section, we provide numerical experiment results demonstrate the performance of our algorithm for clustering heterogeneous networks with outliers. We also compare our algorithm with several state-of-the-art algorithms.

Recall the structure of the adjacency matrix $ \mtx{A} $ as given in equation~\eqref{eqn; semiraw}, which we reproduce below
\begin{align}
\mtx{A}=\mtx{P}\begin{bmatrix}
	\mtx{K} & \mtx{Z} \\
	\mtx{Z^\transp} & \mtx{W}\\
\end{bmatrix}\mtx{P}^T.
\end{align}
With this in mind, we now describe how we generate the inlier part $ \mtx
K $ and the outlier part $ (\mtx{Z} , \mtx{W} )$ of the adjacency matrix.

\paragraph{Inliers:}

For  each inlier node $ i\in[n] $, the degree heterogeneity parameter $ \theta_i $ is sampled independently from a Pareto($ \alpha $, $ \beta $) distribution with the density function $ f(x|\alpha, \beta) = \frac{\alpha \beta^\alpha}{x^{\alpha+1}} \mathbf{1}_{\{x \ge \beta\}} $, where $ \alpha $ and $ \beta $ are called the \emph{shape} and \emph{scale} parameters, respectively.  We consider different values of the shape
parameter, and choose the scale parameter accordingly so that the expectation of each $ \theta_i $ is fixed at $ 1 $. Note that the heterogeneity of the degree $ \theta_i $’s decreases as the shape parameter $ \alpha $ increases. Given the above $ \vct{\theta} $ and two given  inter and intra-cluster density parameters $ 0<q<p<1 $, we then generate $ \mtx{K} $ according to DCSBM with parameters $ p $, $ q $ and the $ \vct{\theta} $. 

\paragraph{Outliers:}

For generating the outliers we follow~\cite[pp.\ 7]{Cai2014robust}. Let $ \tau \in [0,1]$ be a fixed number. We assume that for each $ i\in [n] $ and $ j\in [m] $, $ Z_{ij} \sim \text{Bernoulli}(\rho_i \tau) $ and $ \sqrt{\rho_i} \sim \text{Uniform}(0,1) $. We also assume that for each $ 1 \le i < j \le m $, $ W_{ij} \sim \text{Bernoulli}(0.7\tau) $. Here $ \tau $ controls the degrees of the outliers.\\

In the following experiments, we choose the parameter $\tau$ such that the outliers' expected degree is moderately above the average of the  inliers' degrees. Given that the inliers' degrees are heavy-tailed, this means that the outlier's degrees are not distinguishable from inliers with a larger degree.
The larger the $\tau$ parameter, the harder is the recovery problem.

In Figure~\ref{fig:400_dense_outliers} we show the performance of our algorithm in terms of the misclassification rate. 
Here we consider varying values for the shape parameter, the number of outliers and intra-cluster density parameter $ p $.
The inter-cluster density parameter is $q = p / 3$. 
As can be seen from the figure, as the problem gets harder in terms of more heterogeneity, more outliers or more sparsity, the performance of our algorithm degrades gracefully.
For $p$ as low as $20\%$ we note that the performance suffers only very little as the degree distribution gets significantly heavier (as captured by a shape parameter $\alpha$) and as we increase the number of outliers.
Very sparse graphs (with intra-cluster connectivity $p = 8\%$ are naturally more sensitive.

\begin{figure}
	\centering
	\begin{tabular}{cc}
		\includegraphics[width=0.45\columnwidth, clip, trim=0 0 0 0]{\figpath 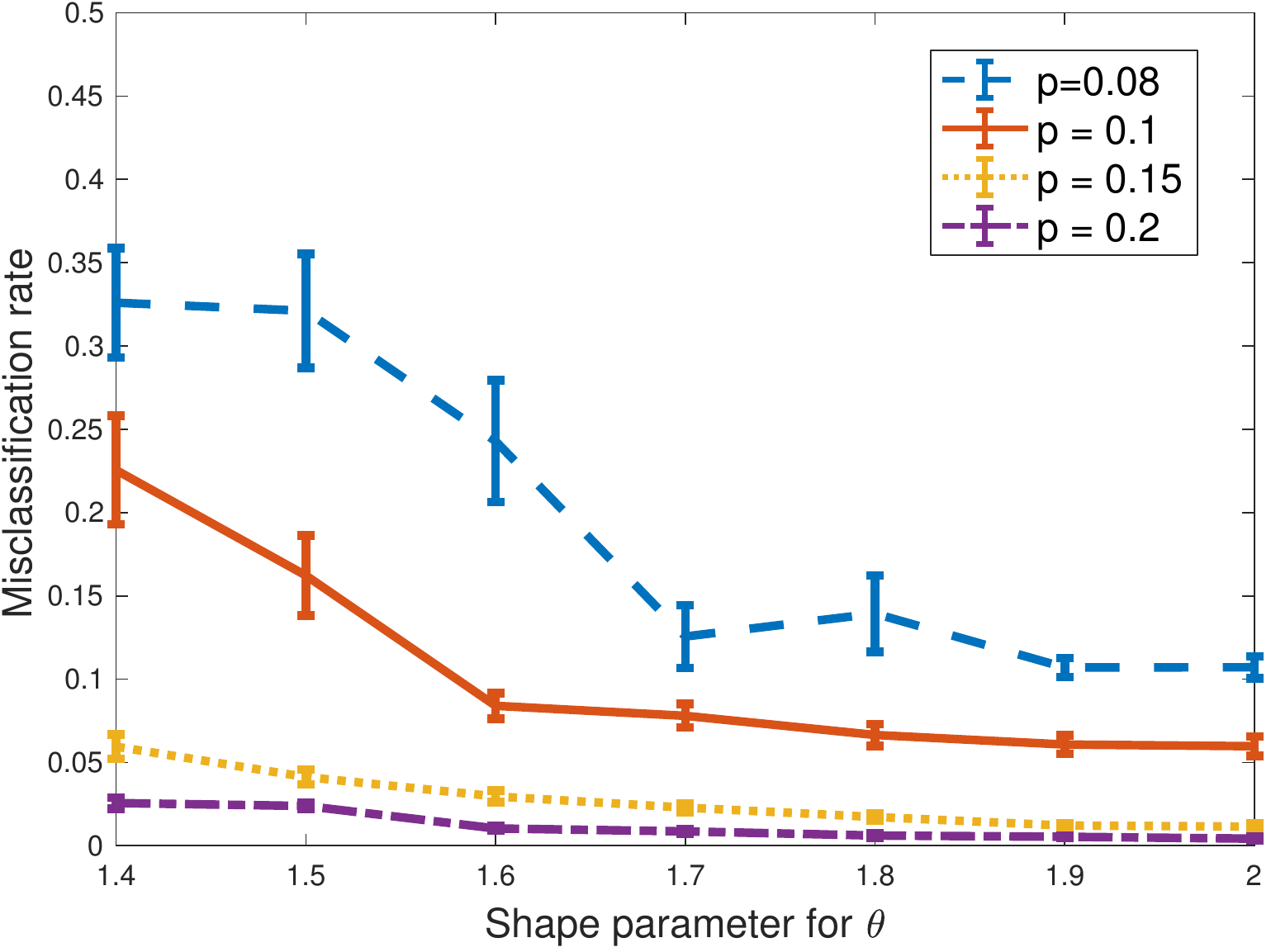}&  
		\includegraphics[width=0.45\columnwidth, clip, trim=0 0 0 0 ]{\figpath 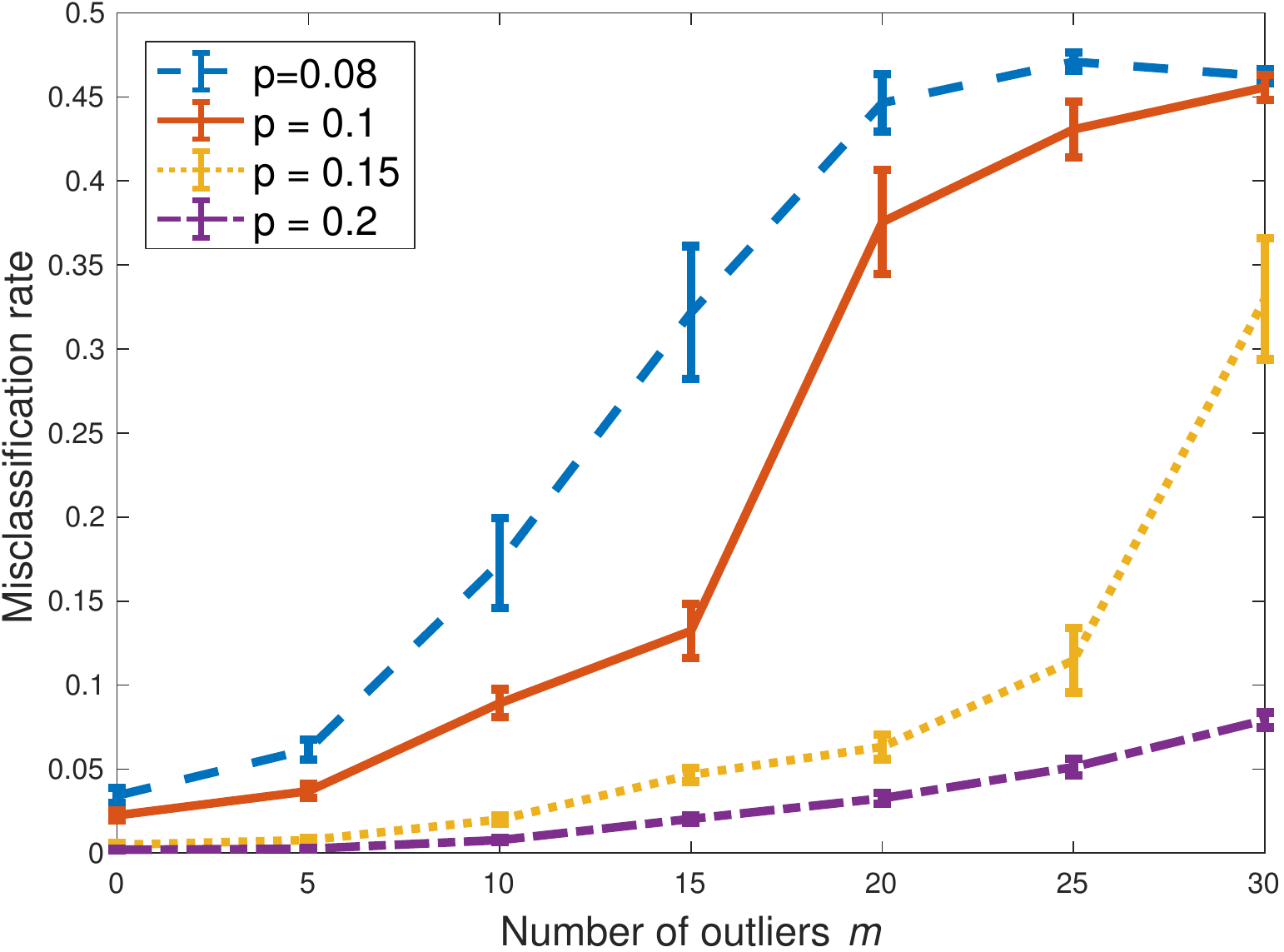}
		\\
	\end{tabular}
	\caption{Misclassification rate versus $ p $, variability of $ \vct{\theta} $ and number of outliers. Here $ n=400$, 2 equal-sized clusters, $ q=p/3 $, and $ \tau = 1 $.  Each point is the average of $ 20 $ trials. Left: $ m=10 $. Right: shape parameter = $ 1.7 $.
	}
	\label{fig:400_dense_outliers}
\end{figure}

In Figure~\ref{fig:1000_dense_outliers}, we consider a setting similar to Figure~\ref{fig:400_dense_outliers}, but with larger graphs $ n=1000 $. The results demonstrate the same relatively unhindered performance under increased heterogeneity and number of outliers, when the graph is not too sparse. 

\begin{figure}
	\centering
	\begin{tabular}{cc}
		\includegraphics[width=0.45\columnwidth, clip, trim=0 0 0 0]{\figpath 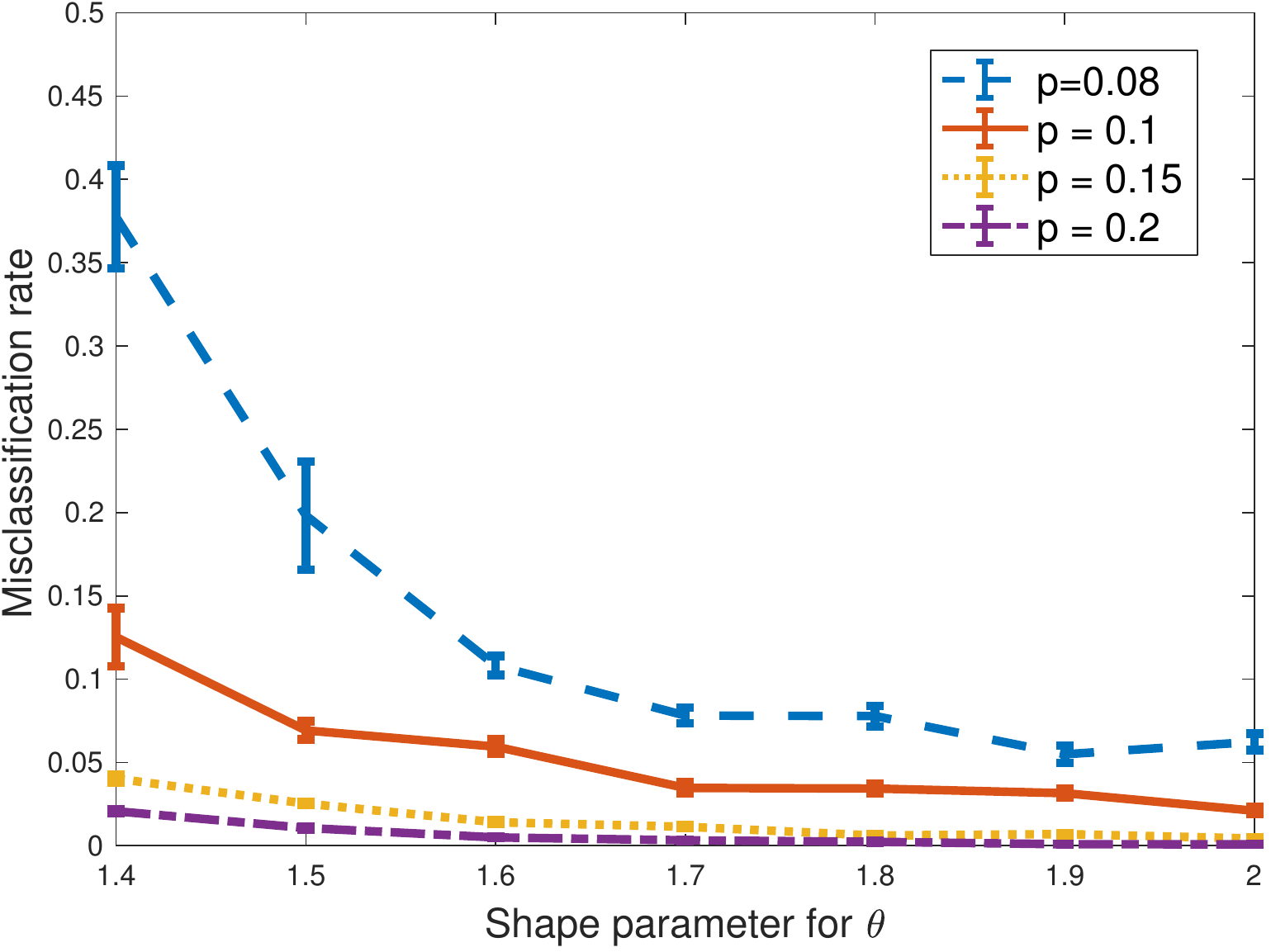}&  
		\includegraphics[width=0.45\columnwidth, clip, trim=0 0 0 0 ]{\figpath 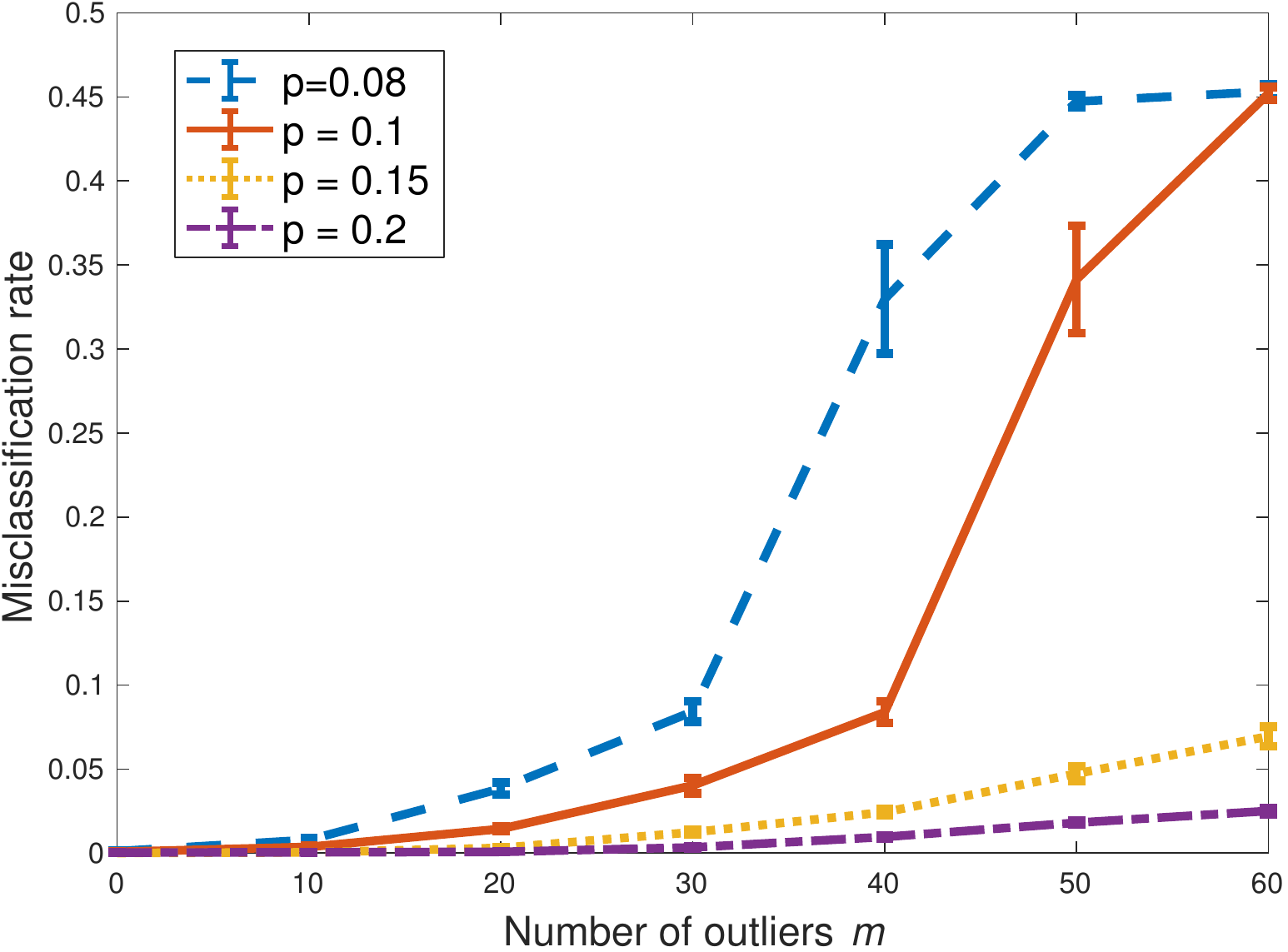}
		\\
	\end{tabular}
	\caption{(Larger graphs.) Misclassification rate versus $ p $, variability of $ \vct{\theta} $ and number of outliers. Here $ n=1000$, 2 equal-sized clusters, $ q=p/3 $, and $ \tau = 1 $.  Each point is the average of $ 20 $ trials. Left: $ m=30 $. Right: shape parameter = $ 1.7 $.}
	\label{fig:1000_dense_outliers}
\end{figure}

We next decrease the connectivity of the outliers, as we set $ \tau =0.5 $.
In this case, the problem becomes easier, as outliers are more restricted.
As shown in Figure~\ref{fig:400_sparse_outliers}, the misclassification rates decrease and remain small even as we increase the number of outliers and the heterogeneity of the inliers.

\begin{figure}
	\centering
	\begin{tabular}{cc}
		\includegraphics[width=0.45\columnwidth, clip, trim = 0 0 0 0]{\figpath 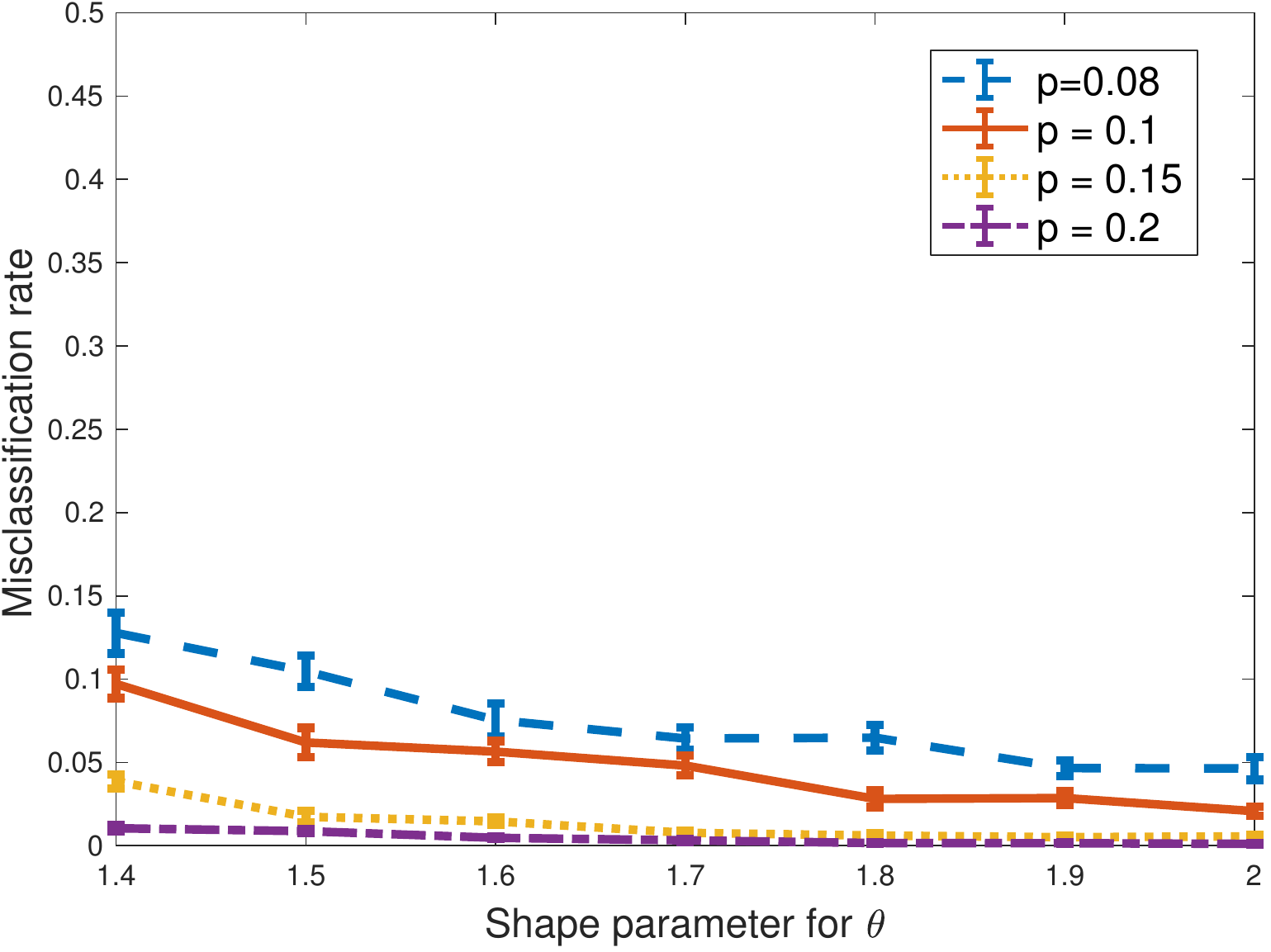} &
		\includegraphics[width=0.45\columnwidth, clip, trim = 0 0 0 0]{\figpath 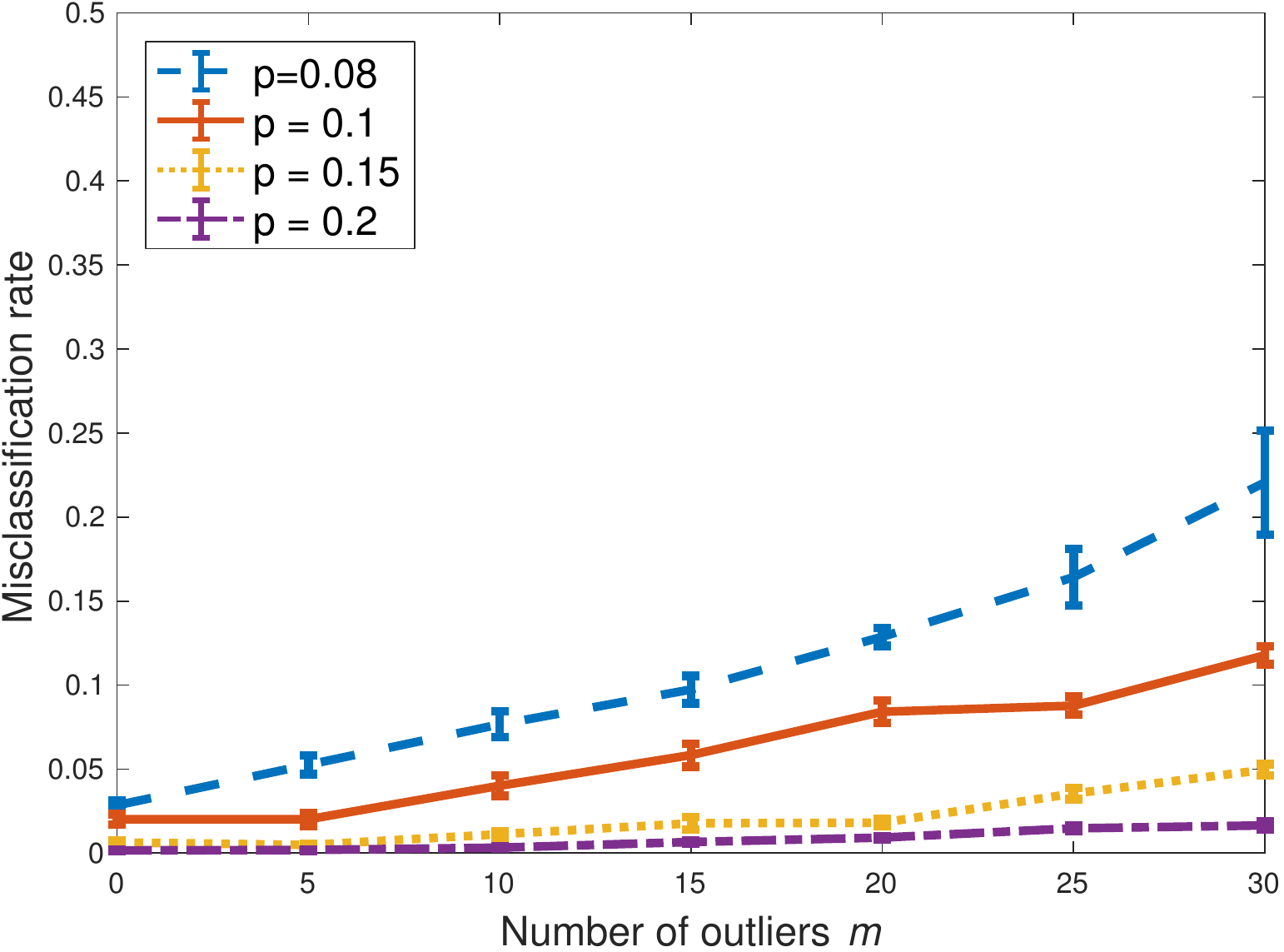}
	\end{tabular}
	\caption{(Sparser outliers.) Misclassification rate versus $ p $, variability of $ \vct{\theta} $ and number of outliers. Here $ n=400$, 2 equal-sized clusters, $ q=p/3 $, and $ \tau = 0.5 $.  Each point is the average of $ 20 $ trials. Left: $ m=10 $. Right: shape parameter = $ 1.7 $. }
	\label{fig:400_sparse_outliers}
\end{figure}

Finally, in Figure~\ref{fig:compare},  we compare our algorithm with three state-of-the-art algorithms:  spectral clustering~\citep{ZLZ14}, SCORE~\citep{Jin2012} and Cai-Li~\citep{Cai2014robust}.
The gain in performance is significant, and in particular for the more adversarial settings with high degree.

\begin{figure}
	\centering
	\begin{tabular}{cc}
	\includegraphics[width=0.45\columnwidth, clip, trim = 0 0 0 0]{\figpath 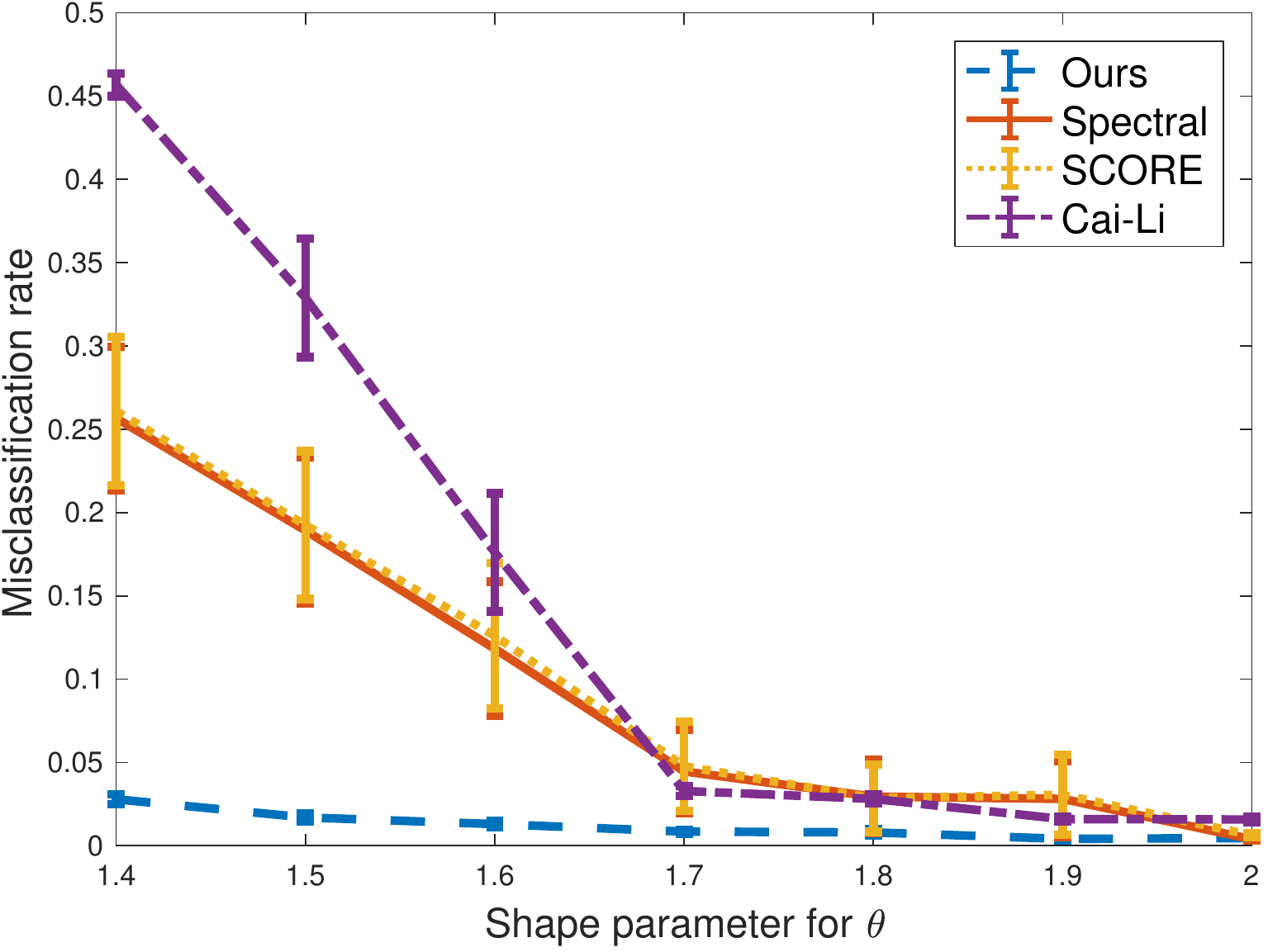} &
	\includegraphics[width=0.45\columnwidth, clip, trim=0 0 0 0]{\figpath 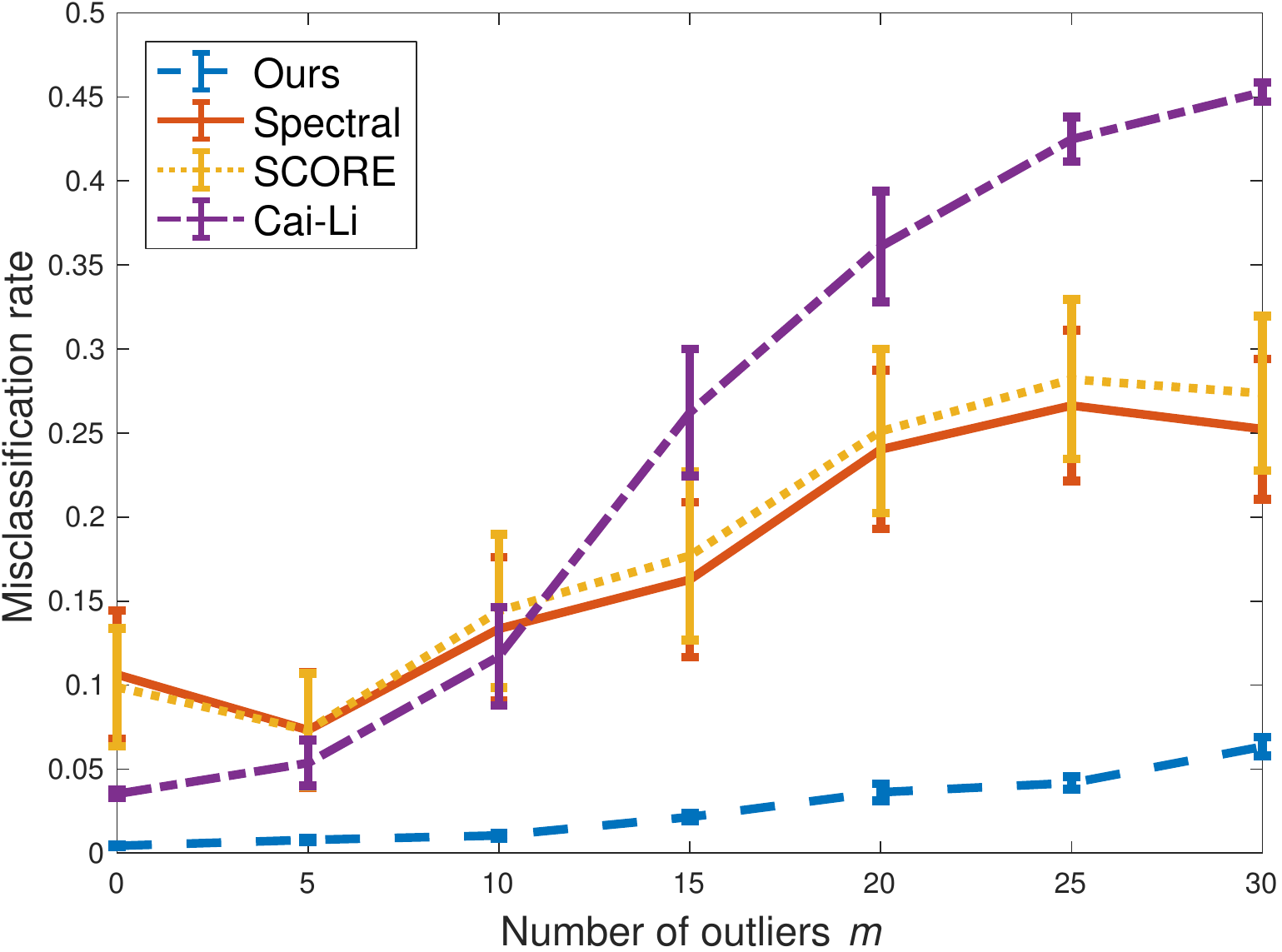}
	\end{tabular}
	\caption{Comparison with spectral clustering, SCORE and Cai-Li. Here $ n=400 $, 2 equal-sized clusters, $ p=0.15 $,  $ q=p/3 $, $ \tau = 0.5 $, and each point is the average of $ 20 $ trials. Left: Misclassification rate versus variability of $ \vct{\theta} $, with $m=10 $ outliers. Right:  Misclassification rate  versus number of outliers $ m $, where the shape parameter for $ \vct{\theta} $ is $ 1.6 $. The standard error bars are shown.
	}
	\label{fig:compare}
\end{figure}



\section{Proof of Theorem~\ref{thm:exact}}\label{sec:proof_thm:exact}

In this section, we prove our main result in Theorem~\ref{thm:exact}.

\subsection{Roadmap of the Proof}

The high level strategy of the proof involves using a \emph{primal-dual witness} approach, which  consists of two steps:
\begin{enumerate}
	\item We first construct a candidate optimal \emph{primal} solution to the convex program~\eqref{eq:cvxnew}. This is done by solving an auxiliary optimization problem; see Lemma~\ref{lemma:1}.
	\item We then certify that this candidate solution is indeed optimal by showing that it satisfies a form of the first-order optimality (KKT) condition, which involves the existence of a corresponding \emph{dual} solution/certificate. This is done by explicitly constructing the dual certificate and proving that it has the desired properties with high probability. A crucial step in the analysis is to decompose the penalized connecting matrix $ \alpha \diag\left(\mtx{d^{*}}\right) + \lambda \mtx{d d^T} - \mtx{A} $ into four terms and establish high-probability bounds for each of  them.
\end{enumerate}

The reason for using the above strategy is as follow: Our goal is to recover the true inlier clusters, so the ``inlier part'' of the desired solution should have a block-diagonal form that corresponds to ground truth clusters, as in equation~\eqref{eq:good_partition}. However, a priori we do not know what the ``outlier part'' of the solution will look like --- it depends on the edge connection of the outliers, and in general will not be exactly zero. Therefore, we need to first ``pin down'' the outlier part of the solution, which is precisely the Step 1 above. To show this solution is indeed optimal, we prove that there exists a corresponding dual solution that ``certifies'' its optimality, which is the goal of the Step 2 above. Below we elaborate on the main technical challenges and novelty in these two steps.

In Step 1, we construct a candidate solution $ \mtx{X^{\star}} = \mtx{V^{*} V^{*^{T}}} $ that is feasible to the primal problem. A major difficulty of proving the optimality of $\mtx{X^{\star}}$ is in that a priori we do not know the exact value the matrix $ \mtx{X^{\star}} $. To overcome this difficulty, we note that the candidate solution $\mtx{X^{\star}}$ is constructed from the optimal solution of the auxiliary optimization problem. The KKT condition of the auxiliary optimization problem gives several desirable constraints for the outlier parts of its primal and dual solutions (i.e., the constraints on $ \beta $ and $ \mtx{x_a} $ in Lemma~\ref{lemma:1}); in particular, the solution $\sum_{a=1}^r \mtx{x_a} \mtx{x_a}^T$  must be perpendicular to the normal vector of the semidefinite cone constraint. We show that this property is equivalent to $ \mtx{\Lambda V^{*}} = \mtx{0} $, where  $\mtx{\Lambda}$ is the outlier part of the matrix $ \mtx{E}=\alpha \diag\left\{\mtx{d^{*}}\right\} + \lambda \mtx{d}\mtx{d^\transp} - \mtx{A} $ that appears in the objective of our convex relaxation approach~\eqref{eq:cvxnew}; cf.~\eqref{def:sterms}. This property allows us to understand the effect of outlier part of the solution $\mtx{X^{\star}}$ and subsequently find the closed form of other parts. 

In Step 2,  to establish the optimality of $\mtx{X^{\star}}$, we need to show that it has an objective value no larger than that of any other feasible solution $ \mtx{X} $. In other words, we need to show that $ \Delta \left(\mtx{X}\right) \triangleq \langle \mtx{X}^*-\mtx{X}, \mtx{E} \rangle \le 0$. To this end, we make use of the property of the matrix $\mtx{E}$, which can be decomposed as in~\eqref{def:sterms} into the block-diagonal part (i.e., the term within inlier clusters $ \mtx{\Psi} $), the off-diagonal part (i.e., the term between inlier clusters $ \mtx{\Phi} $), the outlier-inlier part (i.e., the term between inliers and outliers $ \mtx{\Gamma} $), and the within-outlier part (i.e., the term within outlier set $ \mtx{\Lambda} $).  For example, the element in the block-diagonal part is the sum of some inliers' degree terms and a Bernoulli random variable with a relatively large parameter, while the element in the outlier-inlier part is the sum of some outliers' degree terms. As mentioned, in Step 1 we establish several structural properties of $\mtx{X^{\star}}$.  Combining these properties of $\mtx{X^{\star}}$ and those of $\mtx{E}$,  we can apply  probability concentration inequalities to separately bound the four terms $ \langle \mtx{X}^*-\mtx{X}, \mtx{\Psi} \rangle $, $ \langle \mtx{X}^*-\mtx{X}, \mtx{\Phi} \rangle $, $ \langle \mtx{X}^*-\mtx{X}, \mtx{\Gamma} \rangle $ and $ \langle \mtx{X}^*-\mtx{X}, \mtx{\Lambda} \rangle $ that contribute to $\Delta \left(\mtx{X}\right)$. 

The most challenging point lies in proving that matrix $ \mtx{\Lambda} $ corresponding to the outliers is positive semidefinite. To achieve this, we need to choose the tuning parameter $\alpha$ appropriately, and relate the matrix $ \mtx{\Lambda} $ to another matrix $ \mtx{\tilde{\Lambda}} $, which excludes the ``between inliers'' matrix $ \mtx{\Phi} $.
Then the problem becomes proving that $ \mtx{\tilde{\Lambda}} $ is a positive semidefinite  matrix. We again separate $ \mtx{\tilde{\Lambda}} $ in the inlier part and outlier part and prove the positive semidefinite  matrix property by Gershgorin Theorem~\citep{horn2012}, namely that the absolute value of the diagonal entry is larger than the sum of all off-diagonal entries in the same row.
Another difficulty is that we need to adjust the parameter $ d_i $ so that it has appropriate lower and upper bounds when we apply the Gershgorin Theorem. This is the technical reason why we use $ d_i^* $ instead of $d_i$ in the diagonal penalization term in~\eqref{eq:cvxnew}.\\

Before proceeding with the proof, we note several useful facts. The condition~\eqref{eqn: delta} in Theorem~\ref{thm:exact} implies that $ \theta_i H_a \ge m $, i.e.,  an inlier's expected number of connections to other inliers is larger than the number of outliers.
Moreover, we also have the following upper bound on the maximum of degree of an inlier: with probability at least $ 1-\frac{1}{n^2} $,
\begin{equation}
d_i \le 4\theta_i H^+.
\end{equation}
This bound can be proved using the Chernoff's inequality, which ensures that   $ d_i \le \left(1+\frac{\delta}{5B_{aa}}\right) f_i + m \le 2(\theta_i H_a + m) \le 4\theta_i H_a$ with probability at least $ 1-\frac{1}{n^3} $. 
Finally, we have the relationship  $ \bar{\theta}\ge \bar{\theta}_{\min} > C_0 > 0 $, which follows from the definitions of these quantities and the condition~\eqref{eqn: delta}.

\subsection{Step 1: Solution Candidate}
In this section, we construct a candidate solution $ \mtx{X} $ feasible to our convex relaxation~(\ref{eq:cvxnew}). Define the matrices 
\begin{align}
		\mtx{\tilde{W}} &:= \alpha \diag\left(\mtx{d^{*}_{(r+1)}}\right) + \lambda \mtx{d_{(r+1)}}\mtx{d_{(r+1)}^T} - \mtx{W}, \label{eq:tildeW}\\
		\mtx{\tilde{Z}_a} &:= \lambda \mtx{d_{(a)} d_{(r+1)}^T} - \mtx{Z_a}, \quad a = 1, \cdots, r.
\end{align}
Consequently, we have the expression
\begin{equation}
\begin{aligned}
\mtx{E} &:= \alpha \diag\left\{\mtx{d^{*}}\right\} + \lambda \mtx{d}\mtx{d^\transp} - \mtx{A} \\ 
&=\begin{bmatrix}
\alpha \diag{\left(\mtx{d^{*}_{(1)}}\right)} + \lambda\mtx{d_{(1)} d_{(1)}^T}-\mtx{K_{11}} & \cdots & \lambda \mtx{d}_{(1)} \mtx{d}_{(r)}^T - \mtx{K_{1r}} & \mtx{\tilde{Z_1}} \\
\vdots & \ddots & \vdots & \vdots \\
\lambda \mtx{d}_{(r)} \mtx{d}_{(1)}^T - \mtx{K_{1r}}^T & \cdots & \alpha \diag{\left(\mtx{d^{*}_{(r)}}\right)} + \lambda\mtx{d_{(r)} d_{(r)}^T}-\mtx{K_{rr}} & \mtx{\tilde{Z_r}} \\
\mtx{\tilde{Z_1}^T} & \cdots & 	\mtx{\tilde{Z_r}^T} & \mtx{\tilde{W}}\\
\end{bmatrix}.\\
\end{aligned}
\end{equation}

Since the desired candidate solution of optimization problem (\ref{eq:cvxnew}) has a block-diagonal structure in the inlier part, the cost of inlier part is fixed. We therefore focus on minimizing the cost of the outlier part. The objective function of the optimization problem (\ref{eq:dual}) is actually the $ n+1, n+2, \cdots, n+m $ rows and columns of the objective function of (\ref{eq:cvxnew}). The following lemma, proved in Appendix~\ref{sec:proof_lemma:1},  guarantees the existence of $ r $ vectors $ \mtx{x}_1, \cdots, \mtx{x}_r \in \mathbb{R}^m$. These vectors are used to construct a candidate solution.

\begin{lemma} \label{lemma:1}
	If assumptions (\ref{eqn:lambda}) and (\ref{eqn:alpha}) hold, then the solution to \begin{equation} 	\label{eq:dual}
	\begin{aligned}
	&\min && \sum\limits_{a=1}^{r}\langle \mtx{x}_a, \mtx{\tilde{Z}}_a^T \mtx{1}_{l_a} \rangle + \frac{1}{2} \sum\limits_{a=1}^{r} \mtx{x}_a^T \mtx{\tilde{W}} \mtx{x}_a	\\
	&\text{subject to} && \mtx{x}_a \ge \mtx{0} \quad \text{for } 1\le a \le r,	\\
	&~              && \sum\limits_{a=1}^{r} \mtx{x}_a^T \left(\mtx{e}_a \mtx{e}_j^T\right) \mtx{x}_a \le 1 \quad \text{for } 1\le j \le m,
	\end{aligned}
	\end{equation}
	exists and is unique. 
	Moreover, denote the solutions by $ \mtx{x}_1, \cdots, \mtx{x}_r \in \mathbb{R}^m $, which by definition satisfy $ \norm{x_a}_{\infty} \le 1 $. Then there are nonnegative vectors $ \mtx{\beta}_1, \cdots, \mtx{\beta}_r \in \mathbb{R}^m $ and an $ m \times m $ nonnegative diagonal matrix \begin{equation}
		\mtx{\Xi} = \diag\left\{ \xi_1, \cdots, \xi_r \right\}
	\end{equation}
	such that 
	\begin{align}
		\mtx{\tilde{W} x_a} + \mtx{\tilde{Z}_a^T 1_{l_a}} &= \mtx{\beta_a - \Xi x_a}\label{eq:KKT1}, \\
		\xi_j \left( 1 - \sum\limits_{a=1}^{r} \mtx{x}_a^T \left(\mtx{e}_a \mtx{e}_j^T\right) \mtx{x}_a \right) &= 0 \quad \text{for } 1\le j \le m \label{eq:KKT2}, \\ 
		\langle \mtx{x}_a, \mtx{\beta}_a \rangle &= 0 \quad \text{for } 1\le a \le r \label{eq:KKT3}. 		
	\end{align}
	In addition, we have \begin{equation}
		\mtx{x}_a^T \left( \mtx{\tilde{W} + \Xi} \right)\mtx{x}_b \le m\sqrt{l_a l_b} \quad \text{for } 1\le a, b \le r.
	\end{equation}
	Furthermore, for all $ a = 1, \cdots, r $ and $ j = 1, \cdots, m $, we have \begin{equation}
		\beta_{a_j} + \mtx{e}_j^T \mtx{Z}_a^T\mtx{1}_{l_a} \le \alpha d^{*}_{n+j} x_{a_j} + \lambda d_{n+j} \sum\limits_{s=1}^{m} d_{n+s}x_{a_s}  + \xi_j x_{a_j} + \lambda \tilde{d}_a d_{n+j}.
	\end{equation}
	Finally, for all $ a = 1, \cdots, r $, we have \begin{equation}
		\mtx{0} \le \mtx{\beta_a} \le \lambda \left(\tilde{d}_a + \tilde{d}_{r+1}\right) \mtx{d_{(r+1)}}.
	\end{equation}
\end{lemma}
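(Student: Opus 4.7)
The plan is to treat \eqref{eq:dual} as a convex quadratic program and to read off each listed property from its KKT system. I will begin by showing that $\tilde{\mtx{W}}$ defined in \eqref{eq:tildeW} is strictly positive definite under the hypothesis \eqref{eqn:alpha}. Indeed, since $d^{*}_{n+j} \geq H^{+}$ by construction, $\lambda\mtx{d}_{(r+1)}\mtx{d}_{(r+1)}^{\transp} \succeq \mtx{0}$, and $\|\mtx{W}\|_{\mathrm{op}} \leq m$ (a crude row-sum bound for a symmetric 0-1 matrix), we have $\tilde{\mtx{W}} \succeq (\alpha H^{+} - m)\,\mtx{I}$, which is positive definite once $c_{1}$ is taken large enough. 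The feasible region is nonempty (it contains $\mtx{0}$), closed, and bounded by the quadratic constraints, so strict convexity of the objective yields existence and uniqueness of the minimizer $(\mtx{x}_{1}, \ldots, \mtx{x}_{r})$.

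Because $\mtx{0}$ is strictly feasible, Slater's condition holds and standard KKT gives nonnegative multipliers $\mtx{\beta}_{a}$ for the sign constraints and $\xi_{j}\geq 0$ for the $j$-th quadratic constraint. Stationarity in $\mtx{x}_{a}$, after collecting the $\xi_{j}$ contributions into the diagonal matrix $\mtx{\Xi}$, gives exactly \eqref{eq:KKT1}, while complementary slackness on the two families of constraints yields \eqref{eq:KKT2} and \eqref{eq:KKT3}. The coordinate bound $\|\mtx{x}_{a}\|_{\infty} \leq 1$ follows immediately from $x_{a_{j}}^{2} \leq \sum_{a'} x_{a',j}^{2} \leq 1$.

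Next I will read off both component-wise bounds directly from \eqref{eq:KKT1}. Expanding $(\tilde{\mtx{W}}\mtx{x}_{a})_{j} = \alpha d^{*}_{n+j} x_{a_{j}} + \lambda d_{n+j}\sum_{s} d_{n+s} x_{a_{s}} - (\mtx{W}\mtx{x}_{a})_{j}$ and $(\tilde{\mtx{Z}}_{a}^{\transp}\mtx{1}_{l_{a}})_{j} = \lambda d_{n+j}\tilde{d}_{a} - (\mtx{Z}_{a}^{\transp}\mtx{1}_{l_{a}})_{j}$, isolating $\beta_{a_{j}} + (\mtx{Z}_{a}^{\transp}\mtx{1}_{l_{a}})_{j}$, and discarding the nonpositive $-(\mtx{W}\mtx{x}_{a})_{j}$ term yields the stated bound on $\beta_{a_{j}}$. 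For the upper bound on $\mtx{\beta}_{a}$, I will use the scalar complementary slackness $\beta_{a_{j}} x_{a_{j}} = 0$ forced by $\langle \mtx{x}_{a}, \mtx{\beta}_{a}\rangle = 0$ with both vectors nonneg: on the support of $\mtx{\beta}_{a}$ the terms $\alpha d^{*}_{n+j} x_{a_{j}}$ and $\xi_{j} x_{a_{j}}$ vanish, and $\sum_{s} d_{n+s} x_{a_{s}} \leq \tilde{d}_{r+1}$ by $\|\mtx{x}_{a}\|_{\infty}\leq 1$, leaving $\beta_{a_{j}} \leq \lambda d_{n+j}(\tilde{d}_{a} + \tilde{d}_{r+1})$.

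The main obstacle is the bilinear inequality $\mtx{x}_{a}^{\transp}(\tilde{\mtx{W}} + \mtx{\Xi})\mtx{x}_{b} \leq m\sqrt{l_{a} l_{b}}$, because it couples two distinct solutions. The key observation is that the positive definiteness established above makes $\tilde{\mtx{W}} + \mtx{\Xi}$ PSD, so Cauchy--Schwarz in the semi-inner product it induces gives
\[
\mtx{x}_{a}^{\transp}(\tilde{\mtx{W}} + \mtx{\Xi})\mtx{x}_{b} \leq \sqrt{\mtx{x}_{a}^{\transp}(\tilde{\mtx{W}} + \mtx{\Xi})\mtx{x}_{a}}\cdot\sqrt{\mtx{x}_{b}^{\transp}(\tilde{\mtx{W}} + \mtx{\Xi})\mtx{x}_{b}}.
\]
It remains to bound each diagonal form by $l_{a} m$. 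Using \eqref{eq:KKT1} together with \eqref{eq:KKT3} and the expansion of $\tilde{\mtx{Z}}_{a}^{\transp}\mtx{1}_{l_{a}}$, one obtains $\mtx{x}_{a}^{\transp}(\tilde{\mtx{W}} + \mtx{\Xi})\mtx{x}_{a} = \mtx{1}_{l_{a}}^{\transp}\mtx{Z}_{a}\mtx{x}_{a} - \lambda\tilde{d}_{a}\,\mtx{x}_{a}^{\transp}\mtx{d}_{(r+1)}$; dropping the nonpositive second term and using that $\mtx{Z}_{a}$ and $\mtx{x}_{a}$ have entries in $[0,1]$ yields the crude but sufficient bound $l_{a} m$, completing the argument.
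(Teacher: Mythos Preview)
Your proof is correct and follows essentially the same route as the paper: positive definiteness of $\tilde{\mtx{W}}$ (you use the spectral bound $\|\mtx{W}\|_{\mathrm{op}}\le m$ where the paper invokes Gershgorin, but the content is the same), Slater/KKT for \eqref{eq:KKT1}--\eqref{eq:KKT3}, then the identity $\mtx{x}_a^\transp(\tilde{\mtx{W}}+\mtx{\Xi})\mtx{x}_a = -\mtx{x}_a^\transp\tilde{\mtx{Z}}_a^\transp\mtx{1}_{l_a}\le m l_a$ combined with Cauchy--Schwarz for the bilinear bound, and componentwise reading of \eqref{eq:KKT1} together with $\beta_{a_j}x_{a_j}=0$ for the remaining inequalities. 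All steps match the paper's argument.
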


To proceed, we define the matrices 
\begin{equation}
\mtx{V}^* = \left[\mtx{v}_1^*, \cdots, \mtx{v}_r^* \right] = \begin{bmatrix}
\mtx{1}_{l_1} &\mtx{0}&\cdots&\mtx{0}\\
\mtx{0}&\mtx{1}_{l_2}&\cdots&\mtx{0}\\
\vdots&\vdots&\ddots&\vdots\\
\mtx{0}&\mtx{0}&\cdots&\mtx{1}_{l_r}\\
\mtx{x}_1&\mtx{x}_2&\cdots&\mtx{x}_r\\
\end{bmatrix}
\end{equation}
and \begin{equation}
\mtx{X}^* = \mtx{V}^* \mtx{V}^{*T} = \begin{bmatrix}
\mtx{J_{l_1}} &\cdots&\mtx{0}&\mtx{1}_{l_1}\mtx{x}_1^T\\
\vdots&\ddots&\vdots&\vdots\\
\mtx{0}&\cdots&\mtx{J_{l_r}}&\mtx{1}_{l_r}\mtx{x}_r^T\\
\mtx{x}_1\mtx{1}_{l_1}^T&\cdots&\mtx{x}_r\mtx{1}_{l_r}^T&\sum\limits_{a=1}^{r} \mtx{x}_a \mtx{x}_a^T
\end{bmatrix}
\end{equation}
Since $ \mtx{x_a} $'s are feasible to optimization problem~(\ref{eq:dual}), we can easily see that $ \mtx{X}^* $ is feasible to optimization problem~(\ref{eq:cvxnew}). In the sequel, we will prove that the $ \mtx{X}^* $ is actually an optimal solution to (\ref{eq:cvxnew}). 

%

\subsection{Step 2:  Verification of the solution to the dual problem}

To establish the theorem, it suffices to show for any feasible solution $ \mtx{X} $ to the program~\eqref{eq:cvxnew} with $ \mtx{X} \neq \mtx{X}^* $, there holds 
\begin{equation}
	\Delta \left(\mtx{X}\right) \triangleq \langle \mtx{X}^*-\mtx{X}, \alpha \diag\left\{\mtx{d^{*}}\right\} + \lambda \mtx{d}\mtx{d^\transp} - \mtx{A} \rangle 
	< 0.
\end{equation}

To this end, we will prove that $\Delta \left(\mtx{X}\right) $ can be decomposed as 
\begin{equation}
\begin{aligned}
\Delta(\mtx{X}) &= \langle \mtx{X}^*-\mtx{X}, \alpha \diag\left\{\mtx{d^{*}}\right\} + \lambda \mtx{d}\mtx{d^\transp} - \mtx{A}\rangle\\
				&= \langle \mtx{X}^*-\mtx{X}, \mtx{\Psi} + \mtx{\Phi} +\mtx{\Gamma}+ \mtx{\Lambda} \rangle \label{def:sterms}\\
				&=: S_1 + S_2 + S_3 + S_4,
\end{aligned}
\end{equation}
where the matrices $\mtx{\Psi}$, $\mtx{\Phi}$ and $\mtx{\Gamma}$
have the  form
\begin{equation*}
	\mtx{\Psi} = \begin{bmatrix}
	-\mtx{\Psi}_{11}&\cdots&\mtx{0}&\mtx{0}\\
	\vdots&\ddots&\vdots&\vdots\\
	\mtx{0}&\cdots&-\mtx{\Psi}_{rr}&\mtx{0}\\
	\mtx{0}&\cdots&\mtx{0}&\mtx{0}
	\end{bmatrix}, \quad
	\mtx{\Phi} = \begin{bmatrix}
	\mtx{0}&\cdots&\mtx{\Phi}_{1r}&\mtx{0}\\
	\vdots&\ddots&\vdots&\vdots\\
	\mtx{\Phi}_{1r}^T&\cdots&\mtx{0}&\mtx{0}\\
	\mtx{0}&\cdots&\mtx{0}&\mtx{0}
	\end{bmatrix},
\end{equation*}
\begin{equation*}
	\mtx{\Gamma} = \begin{bmatrix}
	\mtx{0}&\cdots&\mtx{0}&\frac{1}{G_1}\mtx{\theta}_{(1)}\mtx{\beta}_1^T \\
	\vdots&\ddots&\vdots&\vdots\\
	\mtx{0}&\cdots&\mtx{0}&\frac{1}{G_r}\mtx{\theta}_{(r)}\mtx{\beta}_r^T\\
	\frac{1}{G_1}\mtx{\beta}_1\mtx{\theta}_{(1)}^T&\cdots&\frac{1}{G_r}\mtx{\beta}_r\mtx{\theta}_{(r)}^T&-\mtx{\Xi}
	\end{bmatrix}
\end{equation*}
and the matrix  $ \mtx{\Lambda} = \alpha \diag\left\{\mtx{d^{*}}\right\} + \lambda \mtx{d}\mtx{d^\transp} - \mtx{A} - (\mtx{\Psi} + \mtx{\Phi} +\mtx{\Gamma})$ satisfies $ \mtx{\Lambda} \mtx{V^{\star}} = \mtx{0} $.

\bigskip
In the following, we will construct one by one the matrices  $ \mtx{\Lambda}, \mtx{\Psi}$ and $ \mtx{\Phi} $ in the decomposition \eqref{def:sterms} and prove that $ \mtx{\Psi}_{aa} > 0 $, $ \mtx{\Phi}_{ab} > 0 $ and $ \mtx{\Lambda} \succeq \mtx{0} $. Finally, we will prove that $S_1 <0$ and $S_i \le 0 $ for $i=2,3,4 $, from which we can conclude that $\Delta \left(\mtx{X}\right) < 0$ and thereby finish the proof.

\subsubsection{Construction of $ \Psi_{aa} $ and $ \Phi_{ab} $ in  \eqref{def:sterms}}
\label{sec:psiaa}

The equality $ \mtx{\Lambda} \mtx{V^{*}} = \mtx{0} $ yields that
\begin{numcases}{}
\left(\mtx{\tilde{Z_a}}^T -\frac{1}{G_a} \mtx{\beta_a}\mtx{\theta_{(a)}}^T \right)\mtx{1}_{l_a} + \left(\tilde{\mtx{W}}+\mtx{\Xi}\right)\mtx{x}_a = \mtx{0}\label{eq:existence1} \\
\left( \alpha \diag \left\{\mtx{d_{(a)}^{*} } \right\} + \lambda \mtx{d_{(a)}} \mtx{d_{(a)}^T}- \mtx{K_{aa}}+\mtx{\Psi_{aa}}\right) \mtx{1_{l_a}} + \left(\mtx{\tilde{Z_a}} -\frac{1}{G_a} \mtx{\theta_{(a)}}\mtx{\beta_a^T} \right)\mtx{x_a} = \mtx{0}\label{eq:existence2}\\
\left( \lambda \mtx{d_{(a)}} \mtx{d_{(b)}^T}- \mtx{K_{ab}}-\mtx{\Phi_{ab}}\right) \mtx{1_{l_b}} + \left(\mtx{\tilde{Z_a}} -\frac{1}{G_a} \mtx{\theta_{(a)}}\mtx{\beta_a^T} \right)\mtx{x_b} = \mtx{0}\label{eq:existence3}\\
\left( \lambda \mtx{d_{(b)}} \mtx{d_{(a)}^T}- \mtx{K_{ab}}^T-\mtx{\Phi_{ab}}^T\right) \mtx{1_{l_a}} + \left(\mtx{\tilde{Z_b}} -\frac{1}{G_b} \mtx{\theta_{(b)}}\mtx{\beta_b^T} \right)\mtx{x_a} = \mtx{0}\label{eq:existence4}
\end{numcases}
It is clear that (\ref{eq:existence1}) is equivalent to (\ref{eq:KKT1}). In the following, we will construct $ \mtx{\Psi}_{aa} $ satisfying (\ref{eq:existence2}) and  $ \mtx{\Phi}_{ab} $ satisfying both (\ref{eq:existence3}) and (\ref{eq:existence4}). 

The equality (\ref{eq:existence2}) is equivalent to \begin{equation}
	\begin{aligned}
	\mtx{\Psi}_{aa} \mtx{1}_{l_a} &= \left( \mtx{K_{aa}} - \alpha \diag \left\{\mtx{d_{(a)}^{*} } \right\} - \lambda \mtx{d_{(a)}} \mtx{d_{(a)}^T} \right) \mtx{1}_{l_a} - \left( \lambda \mtx{d_{(a)}} \mtx{d_{(r+1)}^T} - \mtx{Z_a} - \frac{1}{G_a} \mtx{\theta_{(a)}}\mtx{\beta_a^T} \right) \mtx{x}_a \\
	&= \mtx{K_{aa}} \mtx{1}_{l_a} - \alpha \mtx{d}_{(a)}^* - \lambda \tilde{d}_a \mtx{d}_{(a)} + \mtx{Z}_a \mtx{x}_a - \lambda\mtx{d_{(a)}} \mtx{d_{(r+1)}^T} \mtx{x}_a,
	\end{aligned}
\end{equation}
where the last equality is due to $ \langle \mtx{x}_a, \mtx{\beta}_a \rangle = 0 $. To ensure $ \mtx{\Psi}_{aa} > 0 $, we  construct $ \mtx{\Psi}_{aa} $ as the sum of a non-negative diagonal matrix plus a positive matrix. In particular, we set
\begin{equation}
\begin{aligned}
\mtx{\Psi_{aa}}  \triangleq \ & \diag \left\{ \mtx{K_{aa}}\mtx{1_{l_a}} + \mtx{Z_a}\mtx{x_a} \right\} - \left[ \lambda \left( \mtx{d_{(r+1)}^T} \mtx{x_a} \right) + \lambda \tilde{d}_a  \right]\diag\left\{\mtx{d_{(a)}}\right\}\\
&-\alpha \diag \left\{\mtx{d_{(a)}^{*} }\right\}- \epsilon G_a \diag\left\{\mtx{\theta_{(a)}}\right\} + \epsilon \mtx{\theta_{(a)}}  \mtx{\theta_{(a)}^T}.
\end{aligned}
\end{equation} 
Setting $ \epsilon = \frac{\delta}{10} $ satisfies our requirements.

Next let us construct $ \mtx{\Phi}_{ab} \in \mathbb{R}^{l_a \times l_b} $ satisfying both (\ref{eq:existence3}) and (\ref{eq:existence4}). These two equalities are equivalent to
\begin{numcases}
	\mtx{\Phi}_{ab} \mtx{1_{l_b}} - \lambda \tilde{d_b} \mtx{d_{(a)}} = - \mtx{K_{ab}}\mtx{1_{l_b}} + \mtx{\tilde{Z_a}} \mtx{x_b} - \frac{1}{G_a} \mtx{\theta_{(a)}} \mtx{\beta_a^T} \mtx{x_b} \triangleq \mtx{a}, \label{eq:phiab1}\\
	\mtx{\Phi}_{ab}^T \mtx{1_{l_a}} - \lambda \tilde{d_a} \mtx{d_{(b)}} = - \mtx{K_{ab}^T}\mtx{1_{l_a}} + \mtx{\tilde{Z_b}} \mtx{x_a} - \frac{1}{G_b} \mtx{\theta_{(b)}} \mtx{\beta_b^T} \mtx{x_a} \triangleq \mtx{b}. \label{eq:phiab2}
\end{numcases}
One can verify that 
\begin{equation}
\mtx{1_{l_a}^T} \mtx{a} = - \mtx{1_{l_a}^T} \mtx{K_{ab} 1_{l_b}} - \mtx{x_a^T \left(\tilde{W}+\Xi\right)x_b}= \mtx{1_{l_b}^T b} \triangleq s.
\end{equation}
If we set
\begin{equation}
\mtx{\Phi_{ab}} \triangleq \frac{1}{G_b}\mtx{a \theta_{(b)}^T} + \frac{1}{G_a} \mtx{\theta_{(a)}}\mtx{b^T}-\frac{s}{G_a G_b} \mtx{\theta_{(a)} \theta_{(b)}^T} + \lambda \mtx{d_{(a)} d_{(b)}^T},
\end{equation}
then $ \mtx{\Phi_{ab}} $ satisfies (\ref{eq:phiab1}) and (\ref{eq:phiab2}).
After simplification, we obtain 
\begin{equation}
\begin{aligned} \label{eq:phiabdef}
\mtx{\Phi_{ab}} = &-\left(\frac{1}{G_b}\mtx{K_{ab} 1_{l_b} \theta_{(b)}^T} + \frac{1}{G_a} \mtx{\theta_{(a)} 1_{l_a}^T K_{ab}}\right) + \frac{\mtx{1_{l_a}^T K_{ab} 1_{l_b} }}{G_a G_b}\mtx{\theta_{(a)} \theta_{(b)}^T} +  \lambda \mtx{d_{(a)} d_{(b)}^T} \\
& - \frac{1}{G_a G_b} \left(\mtx{1_{l_a}^T \tilde{Z_a} x_b} + \mtx{1_{l_b}^T \tilde{Z_b} x_a} + \mtx{x_a^T \left(\tilde{W}+\Xi\right)x_b}\right)\mtx{\theta_{(a)} \theta_{(b)}^T}\\ 
&+ \left(\frac{1}{G_a} \mtx{\theta_{(a)} x_a^T \tilde{Z_b}^T}+ \frac{1}{G_b} \mtx{\tilde{Z_a}x_b \theta_{(b)}^T}
\right).\\
\end{aligned}
\end{equation}

As we have shown, $ \mtx{\Psi} $ and $ \mtx{\Phi} $ are well defined, so $ \mtx{\Lambda} $ is given by $ \mtx{\Lambda} = \mtx{E-\Psi-\Phi-\Gamma} $. In the following, we will study the properties of these matrices and give lower bounds for terms  $ S_1, S_2, S_3$ and $ S_4 $ defined in \eqref{def:sterms}.

\subsubsection{The $ S_1 $ Term in \eqref{def:sterms}}
We will show  that 
\begin{equation}  \label{eq:Psi_aa_definition}
\mtx{\Psi_{aa}} - \epsilon \mtx{\theta_{(a)}}  \mtx{\theta_{(a)}^T} \geq \text{and} \succeq \mtx{0}.
\end{equation}
Notice that $ \mtx{\Psi_{aa}} - \epsilon \mtx{\theta_{(a)}} \mtx{\theta_{(a)}}^T $ is a diagonal matrix, so we only need to check that each entry on the diagonal is larger or equal than 0. Since $ \mtx{Z_a} \ge \mtx{0} $, $ \mtx{x_a} \ge \mtx{0}$, and $ \norm{\mtx{x_a}}_\infty \le 1$, it is sufficient to prove
\begin{equation}
\label{eqn: 1}
\sum\limits_{j \in C_a^*} K_{ij} - \alpha d_i^* - \lambda \tilde{d}_{r+1} d_i -  \lambda \tilde{d}_{a} d_i - \epsilon G_a \theta_i \ge 0 \quad \forall i \in C_a^*.
\end{equation}

Notice that \begin{equation}
	(B_{aa} - \delta)\left(1+\frac{\delta}{5p^+}\right)^2 \le (p^+ - \delta)\left( 1+ \frac{2\delta}{5p^+} + \frac{\delta^2}{25{p^+}^2}\right) = p^+ - \frac{3}{5}\delta -\frac{9\delta^2}{25p^+} - \frac{\delta^3}{25{p^+}^2} \le p^+ - \frac{3}{5}\delta
\end{equation}
and \begin{equation}
	(B_{aa} - \delta)\left(1+\frac{\delta}{5p^+}\right) \le (p^+ - \delta)\left(1+\frac{\delta}{5p^+}\right) = p^+ - \frac{4}{5}\delta -\frac{\delta^2}{5p^+} \le p^+ - \frac{4}{5}\delta.
\end{equation}
By Lemma \ref{lemma2}, we have with probability at least $ 1-\frac{1}{n^2} $ 
\begin{equation}
	\lambda \tilde{d}_a d_i \le \frac{1}{H_a^2}\left(B_{aa}-\frac{3}{5}\delta\right)\left(\sum\limits_{j\in C_a^*}f_j\right)f_i \le \frac{1}{H_a^2}\left(B_{aa}-\frac{3}{5}\delta\right)\left(G_aH_a+ml_a\right)(\theta_i H_a + m)
\end{equation}
and \begin{equation}
	\lambda \tilde{d}_{r+1} d_i \le \frac{1}{H_a^2}\left(B_{aa} - \frac{4}{5} \delta \right) f_i \tilde{d}_{r+1} \le \frac{1}{H_a^2}\left(B_{aa} - \frac{4}{5} \delta \right) (\theta_i H_a + m)m(m+n).
\end{equation}
In addition, by Chernoff's Inequality, with probability at least $ 1-\frac{1}{n^3} $ we have \begin{equation}
	\sum\limits_{j\in C_a^*} K_{ij} \ge \theta_i B_{aa}G_a - \sqrt{6\theta_i \log n B_{aa}G_a }.
\end{equation}
Thus, with probability $ \ge 1-\frac{1}{n^2} $, we have \begin{equation}
\begin{aligned}
&\frac{1}{\theta_i}\times \text{LHS of (\ref{eqn: 1})} \\ 
&\ge B_{aa}G_a- \sqrt{\frac{6\log n B_{aa}G_a}{\theta_i} } -\frac{\alpha}{\theta_i}d_i^* - \frac{\lambda}{\theta_i} \tilde{d}_{r+1}d_i - \left(B_{aa} -\frac{3}{5}\delta\right)\left(G_a+\frac{m l_a}{H_a}\right)\left(1+\frac{m}{\theta_i H_a}\right)-\epsilon G_a\\
&\ge \frac{1}{2}\delta G_a - \sqrt{\frac{6\log n B_{aa}G_a}{\theta_i} } -  \frac{\alpha}{\theta_i}d_i^* - \frac{1}{\theta_i H_a^2}\left(B_{aa}-\frac{4}{5}\delta \right)m(m+n) (\theta_i H_a + m)\\
&-B_{aa}\left(\frac{m l_a}{H_a}+\frac{mG_a}{\theta_i H_a}+\frac{m^2 l_a}{\theta_iH_a^2}\right)\\
&\ge \frac{1}{2}\delta G_a - \sqrt{\frac{6\log n B_{aa}G_a}{\theta_i} }-  \frac{\alpha}{\theta_i}d_i^* - \frac{2B_{aa}}{H_a}m(m+n)- B_{aa}\left(\frac{2m l_a}{H_a}+\frac{mG_a}{\theta_i H_a}\right),
\end{aligned}
\end{equation}
where last inequality is due to the fact that $ \theta_iH_a \ge m $.

Combining pieces, we see that the following is sufficient for our goal:
\begin{align}
&\delta \ge C \sqrt{\frac{ B_{aa}\log n }{\theta_i G_a}} \Leftarrow \delta \ge \sqrt{\frac{\log n \cdot p^+}{\theta_{\min} G_{\min}}} \label{eq:delta_cond1}\\
&\delta \ge\frac{C}{G_a} \frac{\alpha}{\theta_i}d_i^* \Leftarrow \delta \ge\frac{C}{G_a} \frac{\alpha}{\theta_i} \theta_i H^+ \Leftarrow \delta \ge C \frac{\alpha n\bar{\theta}p^+}{G_{\min}}\\
&\delta \ge C \frac{B_{aa}m(m+n)}{H_aG_a}\Leftarrow \delta \ge C \frac{p^+ m(m+n)}{n\bar{\theta}q^- G_{\min} } \Leftarrow \delta \ge C \frac{p^+}{q^-} \frac{1}{\bar{\theta}} \frac{m}{G_{\min}} \Leftarrow \delta \ge C\frac{m}{\bar{\theta}G_{\min}} \\
&\delta \ge C\frac{m l_a B_{aa}}{H_a G_a} \Leftarrow \delta \ge C \frac{m l_a p^+}{n\bar{\theta}q^- l_a \bar{\theta}_{\min}} \Leftarrow \delta \ge C \frac{m}{n\bar{\theta}\bar{\theta}_{\min}}\\
&\delta \ge \frac{m G_a B_{aa}}{\theta_i H_a G_a} \Leftarrow \delta \ge C\frac{m p^+}{n\bar{\theta}q^- \theta_{\min}} \Leftarrow \delta \ge C\frac{m }{n\bar{\theta}\theta_{\min}} .
\end{align}
Note that the condition~(\ref{eqn: delta}) in Theorem~\ref{thm:exact} fulfills all the requirements above, thus we have $\mtx{\Psi_{aa}} - \epsilon \mtx{\theta_{(a)}}  \mtx{\theta_{(a)}^T} \geq \text{and} \succeq \mtx{0}$. This implies the weaker result that $ \mtx{\Psi_{aa}} > \mtx{0} $.

Finally, we have
\begin{equation*}
\begin{aligned}
S_1 &= \langle \mtx{X}^*-\mtx{X}, \mtx{\Psi}\rangle\\
&= -\sum_{a=1}^{r} \left(\mtx{X}^*-\mtx{X}\right)_{(a,a)} \mtx{\Psi_{aa}}
< 0
\end{aligned}
\end{equation*}
where the last inequality is due to the fact that all entries of $\mtx{X^{*}}_{(a,a)} $ equal to 1 and all entries of $\mtx{X}_{(a,a)}$ are no larger than 1.

\subsubsection{The $ S_2 $ Term in  \eqref{def:sterms}}

We will first  prove that $ \mtx{\Phi_{ab} > \mtx{0}} $. For the first three terms in (\ref{eq:phiabdef}), we apply Lemma \ref{lemma3} to get
\begin{equation}
\begin{aligned} 
&-\left(\frac{1}{G_b}\mtx{K_{ab} 1_{l_b} \theta_{(b)}^T} + \frac{1}{G_a} \mtx{\theta_{(a)} 1_{l_a}^T K_{ab}}\right) + \frac{\mtx{1_{l_a}^T K_{ab} 1_{l_b} }}{G_a G_b}\mtx{\theta_{(a)} \theta_{(b)}^T} +  \lambda \mtx{d_{(a)} d_{(b)}^T} \\
\ge &-2\left(B_{ab}+\frac{1}{20}\delta\right) \mtx{\theta_{(a)}\theta_{(b)}^T}+\left(B_{ab}-\frac{1}{25}\delta\right)\mtx{\theta_{(a)}\theta_{(b)}^T}+\left(B_{ab}+\frac{6}{25}\delta \right) \mtx{\theta_{(a)} \theta_{(b)}^T}\\
=& \frac{\delta}{10} \mtx{\theta_{(a)} \theta_{(b)}^T},\\
\end{aligned}
\end{equation}
Lemma \ref{lemma:1} also proves that $ \mtx{x_a^T \left(\tilde{W} + \Xi \right) x_b} \le m \sqrt{l_a l_b} $.

To bound the forth and fifth term in (\ref{eq:phiabdef}), we  first bound $ \mtx{\tilde{Z_a} x_b} = \left(  \lambda \mtx{d_{(a)} d_{(r+1)}^T } - \mtx{Z_a} \right) \mtx{x_b}$. Since $ \mtx{x_b} \ge 0 $, $ \norm{\mtx{x_b}}_\infty < 1 $ and $ \mtx{Z_a} $ is a 0-1 matrix, we have 
\begin{equation} \label{eq: zaxb}
\begin{aligned}
\mtx{\tilde{Z_a} x_b} &\le \lambda \mtx{d_{(a)} d_{(r+1)}^T 1_m} \le \frac{B_{aa}-\delta}{H_a^2} \left( 1+\frac{\delta}{5B_{aa}} \right) \mtx{f_{(a)}} \tilde{d}_{r+1} \\
&\le \frac{2}{H_a}\left(B_{aa} - \frac{4}{5}\delta\right)\tilde{d}_{r+1} \mtx{\theta_{(a)}} \le \frac{2m(m+n)}{H_a}\left(B_{aa} - \frac{4}{5}\delta\right) \mtx{\theta_{(a)}},\\
\mtx{\tilde{Z_a} x_b} &\ge - \mtx{Z_a x_b} \ge -\mtx{J_{(l_a, m)} 1_m}  = -m\mtx{1_{l_a}},
\end{aligned}
\end{equation}
where upper bound of $ \mtx{\tilde{Z_a} x_b} $ is due to the facts that $ f_i = \theta_i H_a + m \le 2\theta_i H_a $ and $ \left(B_{aa}-\delta\right)\left( 1+\frac{\delta}{5B_{aa}} \right) \le B_{aa} -\frac{4}{5}\delta$. 

Therefore, to prove $ \mtx{\Phi_{ab}} > 0 $, we only need to prove that 
\begin{equation}
\begin{aligned}
\frac{\delta}{10} \mtx{\theta_{(a)} \theta_{(b)}} > & \frac{1}{G_a G_b} \left( G_a \frac{2m(m+n)}{H_a}\left(B_{aa} - \frac{4}{5}\delta\right) + G_b\frac{2m(m+n)}{H_b}\left(B_{bb} - \frac{4}{5}\delta\right) \right) \mtx{\theta_{(a)} \theta_{(b)}} \\
&+ \frac{m\sqrt{l_a l_b}}{G_a G_b}\mtx{\theta_{(a)} \theta_{(b)}} + \left( \frac{m}{G_a} \mtx{ \theta_{(a)}^T 1_{l_b}} + \frac{m}{G_b} \mtx{1_{l_a} \theta_{(b)}^T} \right),
\end{aligned}
\end{equation}
which is implied by
\begin{eqnarray}
\begin{aligned}
&\frac{\delta}{50} \ge \frac{2m(m+n)}{G_b H_a} \left(B_{aa} - \frac{4}{5}\delta\right) \Leftarrow \delta \ge C \frac{B_{aa}m(m+n)}{n\bar{\theta}q^- G_{\min} } \Leftarrow \delta \ge C \frac{p^+}{q^-} \frac{1}{\bar{\theta}} \frac{m}{G_{\min}} \Leftarrow \delta \ge \frac{m}{G_{\min}}\\
&\frac{\delta}{50} \ge \frac{2m(m+n)}{G_a H_b} \left(B_{bb} - \frac{4}{5}\delta\right)\Leftarrow \delta \ge C \frac{B_{bb}m(m+n)}{n\bar{\theta}q^- G_{\min} } \Leftarrow \delta \ge C \frac{p^+}{q^-} \frac{1}{\bar{\theta}} \frac{m}{G_{\min}} \Leftarrow \delta \ge \frac{m}{G_{\min}} \\
& \frac{\delta}{50} \ge \frac{m\sqrt{l_a l_b}}{G_a G_b} \Leftarrow \delta \ge C\frac{m}{\theta_{\min} G_{\min}}\\
& \frac{\delta}{50} \theta_j \ge \frac{m}{G_a} \Leftarrow \delta \ge C\frac{m}{\theta_{\min} G_{\min}} \\
& \frac{\delta}{50} \theta_i \ge \frac{m}{G_b} \Leftarrow \delta \ge C\frac{m}{\theta_{\min} G_{\min}}.
\end{aligned}
\end{eqnarray}
Note that the condition~(\ref{eqn: delta}) in Theorem~\ref{thm:exact} fulfills all the requirements above, thus we have $ \mtx{\Phi_{ab}} > 0 $.

Finally, we have \begin{equation}
\begin{aligned}
S_2 &= \langle \mtx{X}^*-\mtx{X}, \mtx{\Phi}\rangle\\
&= \sum_{a\neq b} \left(\mtx{X}^*-\mtx{X}\right)_{(a,b)} \mtx{\Phi_{ab}}
\le 0,
\end{aligned}
\end{equation}
where the last inequality is due to the fact that $\mtx{X^{*}}_{(a,b)} = \mtx{0}$ and $\mtx{X}_{(a,b)} \ge \mtx{0}$.

\subsubsection{The $ S_3 $ Term in \eqref{def:sterms}}

By the feasibility of $\mtx{X}$ and the non-negativity of $\mtx{\beta}_a$ and $\mtx{\Xi}$, we have 
\begin{equation}
	\langle \mtx{X_{(a, r+1)}}, \  \frac{1}{G_a} \mtx{\theta_{(a)} \beta_a^T} \rangle \ge 0
\end{equation}
and \begin{equation}
\langle \mtx{X_{(r+1, r+1)}}, \  \mtx{\Xi} \rangle \ge 0.
\end{equation}
By (\ref{eq:KKT3}), i.e. $\langle \mtx{x}_a, \mtx{\beta}_a \rangle = 0$,  we have \begin{equation}
	\langle \mtx{1_{l_a} x_a^T}, \  \frac{1}{G_a} \mtx{\theta_{(a)} \beta_a^T}\rangle = 0.
\end{equation}
By (\ref{eq:KKT2}), we have \begin{equation}
\langle \mtx{J}_m - \sum_{a=1}^{r}\mtx{x_a x_a^T}, \  \mtx{\Xi} \rangle = 0.
\end{equation}
It follows that
\begin{equation*}
\begin{aligned}
S_3 &= \langle \mtx{X}^*-\mtx{X}, \  \mtx{\Gamma}\rangle\\
& = 2\sum_{a=1}^{r} \langle \mtx{1_{l_a} x_a^T}, \  \frac{1}{G_a} \mtx{\theta_{(a)} \beta_a^T}\rangle - \langle \sum_{a=1}^{r}\mtx{x_a x_a^T}, \  \mtx{\Xi}\rangle - \langle \mtx{X_{(a, r+1)}}, \  \frac{1}{G_a} \mtx{\theta_{(a)} \beta_a^T} \rangle + \langle \mtx{X_{(r+1, r+1)}}, \  \mtx{\Xi} \rangle\\
&= - \langle \mtx{X_{(a, r+1)}}, \  \frac{1}{G_a} \mtx{\theta_{(a)} \beta_a^T} \rangle - \langle \mtx{J_m} - \mtx{X_{(r+1, r+1)}}, \  \mtx{\Xi} \rangle\\
&\le 0,
\end{aligned}
\end{equation*}
where the last inequality is due to the fact that all entries of $\mtx{X_{(r+1, r+1)}}$ are no larger than 1.

\subsubsection{The $ S_4 $ Term in \eqref{def:sterms}}

We will first prove that $ \mtx{\Lambda} \succeq \mtx{0} $. The condition $ \mtx{\Lambda V^{*}} = \mtx{0} $ implies that $ \rank\left(\mtx{\Lambda}\right) \le N-r $. Thus we only need to prove that the $ (N-r) $-th largest eigenvalue of $ \mtx{\Lambda} $ is no smaller than $ 0 $ while all other smaller eigenvalues are equal to $ 0 $.

We define the matrix
\begin{equation}
\mtx{\hat{V}}=\begin{bmatrix}
\frac{1}{\sqrt{ \norm{\mtx{\theta_{(1)}}}_1 }}\mtx{\theta_{(1)}} & \mtx{0} & \cdots & \mtx{0}\\
\mtx{0}  &  \frac{1}{\sqrt{\norm{\mtx{\theta_{(2)}}}_1} } \mtx{\theta_{(2)}} & \cdots\ & \mtx{0} \\
\vdots & \vdots & \ddots & \vdots \\
\mtx{0}   & \mtx{0} & \cdots  &  \frac{1}{\sqrt{\norm{\mtx{\theta_{(r)}}}_1} } \mtx{\theta_{(r)}} \\
\mtx{0} & \mtx{0} & \cdots & \mtx{0} \\
\end{bmatrix} \in \mathbb{R}^{N\times r}
\end{equation}
One sees that $  \mtx{\hat{V}} $ is a basis matrix, i.e., the columns of $  \mtx{\hat{V}} $ are orthogonal unit vectors. Take $ \mtx{\hat{V}}_\perp \in \mathbb{R}^{N\times (N-r)} $ such that $ \mtx{U} = \left[ \mtx{\hat{V}}_\perp, \mtx{\hat{V}} \right] $ is an orthogonal matrix. 
Define the matrix 
\begin{equation}
\begin{aligned}
\mtx{\tilde{\Lambda}} &\triangleq \begin{bmatrix}
\mtx{\tilde{\Lambda}_1} & \mtx{\tilde{\Lambda}_2} \\
\mtx{\tilde{\Lambda}^T_2} & \mtx{\tilde{W}} + \mtx{\tilde{\Xi}} \\
\end{bmatrix} \\
&\triangleq \begin{bmatrix}
\tiny{\alpha \diag{\left(\mtx{d^{*}_{(1)}}\right)} + \lambda\mtx{d_{(1)} d_{(1)}^T} + B_{11} \mtx{\theta_{(1)} \theta_{(1)}^T}-\mtx{K_{11}} + \mtx{\Psi_{11}}} & \cdots & B_{1r}\mtx{\theta_{(1)} \theta_{(r)}^T} - \mtx{K_{1r}} & \mtx{\tilde{Z_1}} - \frac{1}{G_1} \mtx{\theta_{(1)}} \mtx{\beta_1^T} \\
\vdots & \ddots & \vdots & \vdots \\
B_{1r}\mtx{\theta_{(r)} \theta_{(1)}^T} - \mtx{K_{1r}^T} & \cdots & \tiny{\alpha \diag{\left(\mtx{d^{*}_{(r)}}\right)} + \lambda\mtx{d_{(r)} d_{(r)}^T} + B_{rr} \mtx{\theta_{(r)} \theta_{(r)}^T}-\mtx{K_{rr}} + \mtx{\Psi_{rr}}} & \mtx{\tilde{Z_r}} - \frac{1}{G_r} \mtx{\theta_{(r)}} \mtx{\beta_r^T} \\
\mtx{\tilde{Z_1}^T} - \frac{1}{G_1}  \mtx{\beta_1}\mtx{\theta_{(1)}^T} & \cdots & 	\mtx{\tilde{Z_r}^T} - \frac{1}{G_r}  \mtx{\beta_r}\mtx{\theta_{(r)}^T} & \mtx{\tilde{W}+\Xi}\\
\end{bmatrix}
\end{aligned}
\end{equation}
The matrix $ \mtx{\tilde{\Lambda}} $ is close to $ \mtx{\Lambda} $ in the sense that
\begin{equation}
\mtx{\tilde{\Lambda} - \Lambda} =  \begin{bmatrix}
B_{11} \mtx{\theta_{(1)} \theta_{(1)}^T} & \cdots & B_{1r}\mtx{\theta_{(1)} \theta_{(r)}^T} - \lambda \mtx{d_{(1)} d_{(r)}^T } + \mtx{\Phi_{1r}} & \mtx{0} \\

\vdots & \ddots  & \vdots & \vdots \\
B_{1r}\mtx{\theta_{(r)} \theta_{(1)}^T}- \lambda \mtx{d_{(r)} d_{(1)}^T } + \mtx{\Phi_{1r}^T} & \cdots & B_{rr}\mtx{\theta_{(r)} \theta_{(r)}^T} & \mtx{0} \\
\mtx{0} & \cdots & 	\mtx{0} & \mtx{0}\\
\end{bmatrix}
\end{equation}

Note that each entry of $ \mtx{\tilde{\Lambda} - \Lambda} $ takes the form of $ C \mtx{\theta_{(a)} \theta_{(b)}^T} $, where $ C $ is a constant. Thus we have $ \mtx{\hat{V}}_\perp^T \left(\mtx{\tilde{\Lambda} - \Lambda}\right) \mtx{\hat{V}}_\perp  = 0 $, or $ \mtx{\hat{V}}_\perp^T \mtx{\tilde{\Lambda}} \mtx{\hat{V}}_\perp  = \mtx{\hat{V}}_\perp^T \mtx{\Lambda} \mtx{\hat{V}}_\perp $. 
Since the matrix 
$$ 
\mtx{U^T \Lambda U} \triangleq \begin{bmatrix}
\mtx{\hat{V}^T_{\perp} \Lambda \hat{V}_{\perp}} & \mtx{\hat{V}^T_{\perp} \Lambda \hat{V}} \\
\mtx{\hat{V}^T \Lambda \hat{V}_{\perp}} & \mtx{\hat{V}^T \Lambda \hat{V}} \\
\end{bmatrix}  $$
has the same eigenvalues as $ \mtx{\Lambda  }$ does, Weyl's Inequality implies that
\begin{equation}
	\lambda_{N-r}(\mtx{\Lambda}) = \lambda_{N-r}(\mtx{U^T \Lambda U}) \ge \lambda_{N-r}(\mtx{\hat{V}^T_{\perp} \Lambda \hat{V}_{\perp}}) = \lambda_{N-r}(\mtx{\hat{V}^T_{\perp} \tilde{\Lambda} \hat{V}_{\perp}}) \ge \lambda_{N}(\mtx{U^T \Lambda U}) = \lambda_N(\mtx{\tilde{\Lambda}}).
\end{equation}
Thus we only need to prove $  \mtx{\tilde{\Lambda}} \succ \mtx{0}$.

To this end, we consider the decomposition
\begin{equation}
\begin{aligned}
\mtx{\tilde{\Lambda}_1}  =& \mtx{F_1 + F_2} \\
& \begin{bmatrix}
\tiny{\alpha \diag{\left(\mtx{d^{*}_{(1)}}\right)} + \lambda\mtx{d_{(1)} d_{(1)}^T} + B_{11} \mtx{\theta_{(1)} \theta_{(1)}^T}-\mtx{K_{11}} + \mtx{\Psi_{11}}} & \cdots & \mtx{0} \\
\vdots & \ddots & \vdots \\
\mtx{0} & \cdots & \tiny{\alpha \diag{\left(\mtx{d^{*}_{(r)}}\right)} + \lambda\mtx{d_{(r)} d_{(r)}^T} + B_{rr} \mtx{\theta_{(r)} \theta_{(r)}^T}-\mtx{K_{rr}} + \mtx{\Psi_{rr}}} 
\end{bmatrix}\\
& + \begin{bmatrix}
\mtx{0} & \cdots & B_{1r}\mtx{\theta_{(1)} \theta_{(r)}^T} - \mtx{K_{1r}} \\
\vdots & \ddots & \vdots \\
B_{1r}\mtx{\theta_{(r)} \theta_{(1)}^T} - \mtx{K_{1r}^T} & \cdots & \mtx{0}
\end{bmatrix}.
\end{aligned}
\end{equation}
In Section~\ref{sec:psiaa}, we proved $ \mtx{\Psi_{aa}} \succeq \frac{\delta}{4} G_a \diag\left( \mtx{\theta_{(a)}} \right) + \frac{\delta}{10} \mtx{\theta_{(a)} \theta_{(a)}^T} $. Combining with (\ref{eqn: norm}), we have 
\begin{equation*}
\begin{aligned}
&\alpha \diag{\left(\mtx{d^{*}_{(a)}}\right)} + \lambda\mtx{d_{(a)} d_{(a)}^T} + \left(B_{aa} \mtx{\theta_{(a)} \theta_{(a)}^T}-\mtx{K_{aa}}\right) + \mtx{\Psi_{aa}} \\
\succeq &\alpha \diag{\left(\mtx{d_{(a)}}\right)} + \lambda\mtx{d_{(a)} d_{(a)}^T} + \left(B_{aa} \mtx{\theta_{(a)} \theta_{(a)}^T}-\mtx{K_{aa}}\right) + \mtx{\Psi_{aa}} \\
\succeq & \alpha \left(1-\frac{\delta}{5B_{aa}}\right) \diag\left(\mtx{f_{(a)}}\right) + \mtx{0} - \left(2\log l_a + \sqrt{6 \theta_{\max}^2 l_a B_{aa} \log l_a} \right) \mtx{I_{l_a}} + \frac{\delta}{4} G_a \diag\left( \mtx{\theta_{(a)}} \right) + \frac{\delta}{10} \mtx{\theta_{(a)} \theta_{(a)}^T} \\
\succeq & \left(\frac{12m}{5} + \frac{\delta G_a}{4}\right) \diag\left(\mtx{\theta_{(a)}}\right) +\frac{\delta}{10} \mtx{\theta_{(a)} \theta_{(a)}^T} - \left(2\log l_a + \sqrt{6 \theta_{\max}^2 l_a B_{aa} \log l_a} \right) \mtx{I_{l_a}} \\
\succeq & \left(\frac{12m}{5} + \frac{\delta G_{\min}}{4}\right)\diag\left(\mtx{\theta_{(a)}}\right) - C \sqrt{\theta_{\max}^2 n p^+ \log n}  \mtx{I_{l_a}}.
\end{aligned}
\end{equation*}
Thus we have
\begin{equation}
\mtx{F_1} \succeq \left(\frac{12m}{5} + \frac{\delta G_{\min}}{4}\right)\diag\left(\mtx{\theta}\right) - C \sqrt{\theta_{\max}^2 n p^+ \log n}  \mtx{I_{n}}.
\end{equation}
Combining with the bound \eqref{eq:F2bound} to be proved later, we have 
\begin{equation*}
\mtx{\tilde{\Lambda_1}} = \mtx{F_1 + F_2} \succeq \left(\frac{12m}{5} + \frac{\delta G_{\min}}{4}\right)\diag\left(\mtx{\theta}\right) - C \sqrt{\theta_{\max}^2 n p^+ \log n}  \mtx{I_{n}} - C_0 \left( \sqrt{n \theta_{\max}^2 p^+ \log n} + \log n \right) \mtx{I_n}.
\end{equation*}

By taking
\begin{equation}
\delta \ge C \left( \frac{\sqrt{n \theta_{\max}^2 p^+ \log n}}{G_{\min}\theta_{\min}} + \frac{\log n}{G_{\min}\theta_{\min}} \right)
\end{equation}
when $ C $ is large enough, we obtain that 
\begin{equation}
\mtx{\tilde{\Lambda_1}} \succeq \frac{G_{\min}\delta}{8} \diag\left(\mtx{\theta}\right)
\end{equation}

With the above bound, to prove $ \mtx{\tilde{\Lambda}} \succ \mtx{0} $, it suffices to prove \begin{equation} \label{eqn:2.42}
\begin{bmatrix}
\frac{G_{\min}\delta}{8} \diag\left(\mtx{\theta}\right) & \mtx{\tilde{\Lambda_2}} \\
\mtx{\tilde{\Lambda_2^T}} & \mtx{\tilde{W} + \Xi}
\end{bmatrix} \succ \mtx{0}.
\end{equation}
Set $ w := \frac{C m}{G_{\min} \theta_{\min} \delta } $ with a sufficiently large constant $ C $. By multiplying  both sides of (\ref{eqn:2.42}) by 
$ \begin{bmatrix}
w\mtx{I_n} & \mtx{0} \\ \mtx{0} & \mtx{I_m}
\end{bmatrix} $, it suffices to prove \begin{equation}
\tilde{\tilde{\mtx{\Lambda}}} = \begin{bmatrix}
\frac{G_{\min}\delta w^2}{8} \diag\left(\mtx{\theta}\right) & w\mtx{\tilde{\Lambda_2}} \\ 
w \mtx{\tilde{\Lambda_2^T}} & \mtx{\tilde{W}+\Xi}
\end{bmatrix} \succ \mtx{0}.
\end{equation}
The above inequality is true if we can prove that the sum of absolute value of all off-diagonal entries is less than the absolute value of corresponding diagonal entry. 

For the first $ n $ rows, we have
\begin{equation}
w\mtx{\tilde{\Lambda_2}} = w \begin{bmatrix}
\lambda \mtx{d_{(1)} d_{(r+1)}^T}- \mtx{Z_1} - \frac{1}{G_1} \mtx{\theta_{(1)} \beta_1^T} \\
\vdots \\
\lambda \mtx{d_{(r)} d_{(r+1)}^T}- \mtx{Z_r} - \frac{1}{G_r} \mtx{\theta_{(r)} \beta_r^T}
\end{bmatrix}.
\end{equation}
Since $ \mtx{0} \le \mtx{Z_a} \le \mtx{ J_{(l_a, m)} } $, and by Lemma \ref{lemma:1}, $ \mtx{0} \le \mtx{\beta_a} \le \lambda \left(\tilde{d}_a + \tilde{d}_{r+1}\right) \mtx{d_{(r+1)}}$, the sum of absolute value of $ i $-th row of $ w\mtx{\tilde{\Lambda_2}} $ is no larger than \begin{equation}
\begin{aligned}
&w\left( \lambda d_i \tilde{d}_{r+1} + m + \frac{\theta_i}{G_a} \lambda \left(\tilde{d}_a + \tilde{d}_{r+1}\right)\tilde{d}_{r+1} \right).
\end{aligned}
\end{equation}
Therefore, we only need to prove \begin{equation} \label{eqn: 250}
\begin{aligned}
&\frac{G_{\min}\delta w^2}{8} \theta_i \ge w\left( \lambda d_i \tilde{d}_{r+1} + m + \frac{\theta_i}{G_a} \lambda \left(\tilde{d}_a + \tilde{d}_{r+1}\right)\tilde{d}_{r+1} \right) \\
\iff & C \frac{\theta_i}{\theta_{\min}}m \ge  \lambda d_i \tilde{d}_{r+1} + m + \frac{\theta_i}{G_a} \lambda\tilde{d}_a \tilde{d}_{r+1} + \frac{\theta_i}{G_a} \lambda\tilde{d}_{r+1}^2
\end{aligned}
\end{equation}
Note that $ H_a =\sum\limits_{b=1}^{r}B_{ab}G_b \ge n \bar{\theta} q^-$. The inequality 
(\ref{eqn: 250}) is implied by the following four conditions:
\begin{align}
C \frac{\theta_i}{\theta_{\min}}m \ge \lambda d_i \tilde{d}_{r+1} &\Leftarrow C \frac{\theta_i}{\theta_{\min}}m \ge \frac{2}{H_a} \left(B_{aa}-\frac{4}{5}\delta\right)\theta_i m(m+n) \\
&\Leftarrow C q^- n \ge B_{aa}(m+n) \Leftarrow C \ge \frac{B_{aa}}{q^-} \left(1 + \frac{m}{n}\right),
\end{align}
and
\begin{align}
C \frac{\theta_i}{\theta_{\min}}m \ge m,
\end{align}
and
\begin{align}
C \frac{\theta_i}{\theta_{\min}}m \ge \frac{\theta_i}{G_a} \lambda\tilde{d}_a \tilde{d}_{r+1} &\Leftarrow C \frac{\theta_i}{\theta_{\min}}m \ge\frac{\theta_i}{G_a} \frac{2}{H_a}\left(B_{aa} - \frac{4}{5}\delta\right) G_a m(m+n)\\
&\Leftarrow C q^- n \ge B_{aa}(m+n) \Leftarrow C \ge \frac{B_{aa}}{q^-} \left(1 + \frac{m}{n}\right),
\end{align}
and
\begin{align}
C \frac{\theta_i}{\theta_{\min}}m \ge \frac{\theta_i}{G_a} \lambda\tilde{d}_{r+1}^2 &\Leftarrow C G_a H_a^2 \ge \theta_{\min}(B_{aa}- \delta)m(m+n)^2 \\
&\Leftarrow C \frac{\theta_{\min}m}{q^-} {q^-}^2 n^2 \bar{\theta}^2 \ge \theta_{\min} B_{aa} m(m+n)^2\\
&\Leftarrow C \ge \frac{1}{\bar{\theta}^2}\frac{B_{aa}}{q^-} \left(1+\frac{m}{n}\right)^2.
\end{align}
The last inequality is due to the fact that $ G_a \ge \theta_{\min} l_{\min} \ge \frac{\theta_{\min}m}{q^-} $ and $ H_a \ge q^- n \bar{\theta}_{\min} $ where $ \bar{\theta} $ is the average value of all $ \theta_i $'s.


To study the bottom $ m $ rows of $ \mtx{\tilde{\tilde{\mtx{\Lambda}}}} $, we notice that \begin{equation} \label{eqn:255}
w\mtx{\tilde{\Lambda_2}^T} = \left[ \lambda \mtx{d_{(r+1)} d_{(1)}^T}- \mtx{Z_1^T} - \frac{1}{G_1} \mtx{\beta_1 \theta_{(1)}^T}, \cdots,  \lambda \mtx{d_{(r+1)} d_{(r)}^T}- \mtx{Z_r^T} - \frac{1}{G_r} \mtx{\beta_r \theta_{(r)}^T} \right],
\end{equation}
so the sum of all absolute values of $ j $-th row  of $ w\mtx{\tilde{\Lambda_2}^T} $ is not larger than 
\begin{equation}
\begin{aligned}
& w\left(\lambda d_{n+j} \sum\limits_{i=1}^{n} d_i + \sum\limits_{a=1}^{r} \left( \mtx{e_j^T Z_a^T 1_{l_a}} + \beta_{a_j} \right)\right) \\
\le & w \left(\lambda d_{n+j} \sum\limits_{i=1}^{n} d_i + \sum\limits_{a=1}^{r} \left(\alpha d_{n+j} x_{a_j} + \lambda d_{n+j} \sum\limits_{s=1}^{m} d_{n+s}x_{a_s}  + \xi_j x_{a_j} + \lambda \tilde{d}_a d_{n+j} \right) \right) \\
\le & w \left( 2\lambda d_{n+j} \sum\limits_{i=1}^{n} d_i + \alpha d_{n+j}\sum\limits_{a=1}^{r}x_{a_j} + \lambda d_{n+j} \left[\sum\limits_{s=1}^{m} d_{n+s} \left(\sum\limits_{a=1}^{r} x_{a_s}\right)\right] + \xi_j \sum\limits_{a=1}^{r}x_{a_j}  \right) \\
\le & w\left( 2\lambda d_{n+j} \sum\limits_{i=1}^{n} d_i + \left(\alpha d_{n+j} + \xi_j \right)\sqrt{r} + \lambda \sqrt{r} d_{n+j} \tilde{d}_{r+1} \right)
\end{aligned}
\end{equation}
On the other hand, the sum of absolute values of off-diagonal entries in the $ j $-th row of $ \mtx{\tilde{W} + \Xi} = \alpha \diag\left(\mtx{d^{*}_{(r+1)}}\right) + \lambda \mtx{d_{(r+1)}}\mtx{d_{(r+1)}^T} - \mtx{W} + \mtx{\Xi} $ is no larger than
\begin{equation}
\begin{aligned}
\lambda d_{n+j} \left(\tilde{d}_{r+1} - d_{n+j}\right) + (m-1).
\end{aligned}
\end{equation}
The $ j $-th diagonal entry of $ \mtx{\tilde{W} + \Xi} $ is no smaller than \begin{equation}
\alpha d^{*}_{n+j} + \lambda d_{n+j}^2 + \xi_j.
\end{equation}
Combining pieces, we see that it suffies to establish
\begin{equation}
\begin{aligned}
&\alpha d^{*}_{n+j} + \lambda d_{n+j}^2 + \xi_j > \\&\lambda d_{n+j} \left(\tilde{d}_{r+1} - d_{n+j}\right) + (m-1) + \\
& w\left( 2\lambda d_{n+j} \sum\limits_{i=1}^{n} d_i + \left(\alpha d_{n+j} + \xi_j \right)\sqrt{r} + \lambda \sqrt{r} d_{n+j} \tilde{d}_{r+1} \right) \\
\Leftarrow & \left(1 - w \sqrt{r}\right) \left(\alpha  d^{*}_{n+j} + \xi_j\right) + 2\lambda d_{n+j}^2 > (1+w\sqrt{r}) \lambda d_{n+j}\tilde{d}_{r+1} + (m-1) + 2w \lambda d_{n+j}\sum\limits_{i=1}^{n}d_i .\\
\end{aligned}
\end{equation}
By requiring $ \delta \ge C \frac{m \sqrt{r}}{\theta_{\min}G_{\min}} $, we have $ w\sqrt{r} = \frac{m \sqrt{r}}{\theta_{\min}G_{\min}\delta} < \frac{1}{2} $ for sufficiently large constant $ C $. Thus we only need to prove
\begin{equation} \label{eqn: 260}
\alpha  d^{*}_{n+j} + \xi_j + 4\lambda d_{n+j}^2 > 3 \lambda d_{n+j}\tilde{d}_{r+1} + 2(m-1) + 4w \lambda d_{n+j}\sum\limits_{i=1}^{n}d_i
\end{equation}
Notice that $ d^{*}_{n+j} \ge \max\left\{ d_{n+j}, \max\limits_{a}H_a \right\}  $, so  the equality~\eqref{eqn: 260} is implied by the conditions:
\begin{align*}
\alpha d^{*}_{n+j} \ge 3 \lambda d_{n+j} \tilde{d}_{r+1} \Leftarrow C\frac{m}{H_a} \ge C^{'} \frac{B_{aa}}{H_a^2} m(m+n) \Leftarrow C H_a \ge B_{aa}(m+n) \\
\Leftarrow C \ge \frac{1}{\bar{\theta}} \frac{B_{aa}}{q^-}\left(1+\frac{m}{n}\right),
\end{align*}
and
\begin{align*}
\alpha d^{*}_{n+j} \ge 2m \text{ holds since }  \alpha \ge C \text{max} \frac{m}{H_a} \text{ and } d_{n+j} \ge H^+,
\end{align*}
and
\begin{align*}
\alpha d^{*}_{n+j} \ge 4w\lambda d_{n+j} \sum\limits_{i=1}^{n}d_i \Leftarrow \alpha \ge C\frac{m}{G_{\min}\theta_{\min}\delta} \lambda \sum\limits_{i=1}^{n}d_i \Leftarrow \delta \ge \frac{Cm}{\alpha G_{\min}\theta_{\min}} \sum\limits_{i=1, i\in C_a^*}^{n} \frac{\theta_i}{H_a}\left(B_{aa}-\frac{4}{5}\delta\right)\\
\Leftarrow \delta \ge \frac{Cm}{\alpha G_{\min}\theta_{\min}} \sum\limits_{a}\frac{G_a B_{aa}}{H_a} \Leftarrow \delta \ge \frac{Cm}{\alpha G_{\min}\theta_{\min}} \frac{p^+}{q^-} \Leftarrow \delta \ge \frac{Cm}{\alpha G_{\min}\theta_{\min}}.
\end{align*}
Note that the condition~(\ref{eqn: delta}) in Theorem~\ref{thm:exact} fulfills all the requirements above. We conclude that $ \mtx{\Lambda} \succ \mtx{0} $.

Finally, we have 
\begin{equation}
\begin{aligned}
S_4 &= \langle \mtx{X}^*-\mtx{X}, \mtx{\Lambda}\rangle\\
&= \langle \mtx{V}^*\mtx{V}^{*^T}, \mtx{\Lambda}\rangle - \langle \mtx{X}, \mtx{\Lambda}\rangle
\le 0,
\end{aligned}
\end{equation}
where the last inequality is due to the fact that $ \mtx{\Lambda V^{*}} = \mtx{0} $ and $\mtx{X}$ and $\mtx{\Lambda}$ are both positive semi-definite matrix.

\subsection{Concluding the proof}
In conclusion, we have proved that $S_1<0$ and $S_2, S_3, S_4 \le 0$. Thus $ \Delta(\mtx{X}) = \langle \mtx{X}^*-\mtx{X}, \mtx{E}\rangle = S_1 + S_2 + S_3 + S_4 < 0 $ and we have finished the proof of Theorem~\ref{thm:exact}.

\subsection{Technical Lemmas} 

\begin{lemma}[Chernoff's Inequality] Let $ X_1, X_2, \cdots, X_n $ be independent random variables with \[
	\mathbb{P}(X_i=1) = p_i, \quad \quad \mathbb{P}(X_i=0) = 1-p_i.
	\]
	Then the sum $ X = \sum_{i=1}^{n} X_i $ has expectation $ \mathbb{E}(X) = \sum_{i=1}^{n}p_i $ and we have 
	\begin{equation*}
	\mathbb{P}\left(X \le \mathbb{E}(X) - t \right) \le e^{\frac{-t^2}{2\mathbb{E}(X)}}
	\end{equation*}
	and \begin{equation*}
	\mathbb{P}\left(X \ge \mathbb{E}(X) + t \right) \le e^{\frac{-t^2}{2\mathbb{E}(X)+t/3}}.
	\end{equation*}
\end{lemma}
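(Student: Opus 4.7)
The plan is to prove both tail bounds by the standard exponential moment (Chernoff) method, which reduces the problem to controlling $\mathbb{E}[e^{\lambda X}]$ and then optimizing over $\lambda$. Write $\mu := \mathbb{E}(X) = \sum_{i=1}^n p_i$. By Markov's inequality applied to $e^{\lambda X}$ for $\lambda > 0$, the upper tail satisfies
\begin{equation*}
\mathbb{P}(X \ge \mu + t) \;\le\; e^{-\lambda(\mu+t)} \, \mathbb{E}\bigl[e^{\lambda X}\bigr],
\end{equation*}
and by independence $\mathbb{E}[e^{\lambda X}] = \prod_{i=1}^n \mathbb{E}[e^{\lambda X_i}] = \prod_{i=1}^n \bigl(1 - p_i + p_i e^{\lambda}\bigr)$. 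The first key step is the pointwise bound $1 - p + p e^{\lambda} \le \exp\bigl(p(e^{\lambda}-1)\bigr)$, which follows from $1+x \le e^x$ with $x = p(e^\lambda-1)$. Multiplying gives the clean bound $\mathbb{E}[e^{\lambda X}] \le \exp\bigl(\mu(e^{\lambda}-1)\bigr)$.

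For the upper tail, I would then choose $\lambda = \ln(1 + t/\mu) > 0$, which yields the classical form
\begin{equation*}
\mathbb{P}(X \ge \mu + t) \;\le\; \exp\bigl(-\mu \, h(t/\mu)\bigr), \qquad h(u) := (1+u)\ln(1+u) - u.
\end{equation*}
The main analytic step is then the elementary inequality $h(u) \ge \dfrac{u^2}{2 + 2u/3}$ for all $u \ge 0$. I would verify this by setting $g(u) := (2 + 2u/3)\,h(u) - u^2$, checking $g(0) = 0$, and differentiating twice to show $g'(0)=0$ and $g''(u) \ge 0$ (the latter reduces to $(2/3)\ln(1+u) + \text{non-negative terms} \ge 0$). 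Substituting the inequality gives exactly $\exp\bigl(-t^2/(2\mu + 2t/3)\bigr)$, and since the exponent in the lemma is $-t^2/(2\mu + t/3)$ (a slightly larger exponent, hence a weaker bound), the stated inequality follows a fortiori.

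For the lower tail, the argument is parallel with $\lambda < 0$: Markov gives
\begin{equation*}
\mathbb{P}(X \le \mu - t) = \mathbb{P}(-X \ge -\mu + t) \le e^{\lambda(\mu - t)} \mathbb{E}[e^{-\lambda X}]
\end{equation*}
for $\lambda > 0$, and the same product bound yields $\mathbb{E}[e^{-\lambda X}] \le \exp(\mu(e^{-\lambda}-1))$. Choosing $\lambda = -\ln(1 - t/\mu)$ (valid for $0 < t < \mu$; the inequality is trivial for $t \ge \mu$) gives $\mathbb{P}(X \le \mu - t) \le \exp(-\mu \tilde h(t/\mu))$ with $\tilde h(u) = (1-u)\ln(1-u) + u$, and the elementary bound $\tilde h(u) \ge u^2/2$ on $[0,1)$ immediately produces the cleaner exponent $-t^2/(2\mu)$.

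The only real obstacle is the calculus lemma $h(u) \ge u^2/(2 + 2u/3)$; everything else is a mechanical application of independence, convexity of the exponential, and optimization over $\lambda$. I would present that lemma as a short separate computation so the probabilistic structure of the proof stays transparent.
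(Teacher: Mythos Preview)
The paper does not actually prove this lemma; it is stated without proof as a standard concentration result and then invoked in the subsequent lemmas. Your exponential-moment argument is exactly the textbook derivation and is correct up to the very last step.

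The gap is in your final ``a fortiori'' comparison for the upper tail. You obtain the exponent $-t^2/(2\mu + 2t/3)$ and then claim that the lemma's exponent $-t^2/(2\mu + t/3)$ is larger, hence weaker. This is backwards: since $2\mu + t/3 < 2\mu + 2t/3$, the lemma's exponent is \emph{more negative}, so the bound $\exp\bigl(-t^2/(2\mu + t/3)\bigr)$ is \emph{stronger} than what you proved, and it does not follow from your inequality. In fact the sharper inequality $h(u) \ge u^2/(2 + u/3)$ that would be needed is false in general (e.g.\ at $u=3$ one has $h(3) = 4\ln 4 - 3 \approx 2.55 < 3 = 9/(2+1)$). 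The form stated in the paper with $t/3$ rather than $2t/3$ in the denominator appears to be a minor misprint of the standard Bernstein-type bound; your derivation yields the correct constant $2t/3$, and that is the version actually used (up to constants) in the paper's applications. Your lower-tail argument is fine as written.
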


	\begin{lemma} \label{lemma2}
	If we define $ f_i = \expect{d_i} $, then with probability at least $ 1-\frac{1}{n^2} $, we have for all $ i = 1, 2, \cdots, n $,
	\begin{align}
		 d_i - f_i &\le 2 \log n + \sqrt{6 f_i \log n}, \label{eqn:2.4} \\
		 d_i - f_i &\ge -\sqrt{6 f_i \log n} \label{eqn:2.5}.
	\end{align}
	Further, if we assume {\color{black} $ \delta \ge C_1 \sqrt{\frac{p^+ \log n}{\theta_{\min} G_{\min}}} $}, where $ p^+ = \max\limits_{a}B_{aa} $, we have \begin{equation} \label{eq:lemma2}
		\left| d_i - f_i \right| \le \frac{f_i}{5p^+}\delta.
	\end{equation} 
	\begin{proof}
		The inequalities (\ref{eqn:2.4}) and (\ref{eqn:2.5}) are the straightforward consequences of Chernoff's Inequality. These inequalities imply that $ \left| d_i - f_i \right| \le 2\log n + \sqrt{6f_i \log n} $. Since $ f_i = \theta_i H_a \ge q^- \theta_{\min} G_{\min} $, it follows from the assumption of Theorem~\ref{thm:exact} that
		\begin{equation}
			\delta \ge C \sqrt{\frac{p^+ \log n}{\theta_{\min} G_{\min}}} \ge C \sqrt{\frac{q^- \log n}{\theta_{\min} G_{\min}}} = C q^-\sqrt{\frac{\log n}{q^- \theta_{\min} G_{\min}}} \ge C q^-\sqrt{\frac{\log n}{f_i}}.
		\end{equation}
		Therefore, as long as $ C $ is large enough, we have $ \sqrt{\log n / f_i} \le 1$. Thus $ \frac{\log n}{f_i} \le \sqrt{\frac{\log n}{f_i}} \le 1 $ and  (\ref{eq:lemma2}) follows immediately.
	\end{proof}
\end{lemma}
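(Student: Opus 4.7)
The proof is a direct application of Chernoff's inequality (stated immediately before the lemma) followed by a union bound; the main work is in the clean choice of $t$. Since the contribution to $d_i$ from inlier neighbors is a sum of $n-1$ independent Bernoulli variables with total mean $f_i = \theta_i H_a$, Chernoff applies. For the upper tail \eqref{eqn:2.4}, I would pick $t = 2\log n + \sqrt{6 f_i \log n}$ and verify $t^2/(2f_i + t/3) \ge 3\log n$; a short case split (whether $\log n \lesssim f_i$, in which case the square-root summand in $t$ dominates, or $\log n \gtrsim f_i$, in which case the $2\log n$ summand dominates) makes this explicit. For the lower tail \eqref{eqn:2.5}, picking $t = \sqrt{6 f_i \log n}$ gives $t^2/(2 f_i) = 3 \log n$ exactly. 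Each of the $2n$ tail events thus has probability $\le n^{-3}$, and a union bound produces a failure probability at most $2/n^2$, which can be absorbed into $1/n^2$ by slightly enlarging the numerical constants $2$ and $6$ in $t$.

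For the deterministic consequence \eqref{eq:lemma2}, combine the two tails to get the pointwise bound $|d_i - f_i| \le 2\log n + \sqrt{6 f_i \log n}$ on the high-probability event, and then dominate each summand by $\frac{f_i \delta}{5 p^+}$. Using $f_i = \theta_i H_a \ge \theta_{\min} q^- G_{\min}$ together with the standing scaling $p^+ \asymp q^-$, the hypothesis $\delta \ge C_1 \sqrt{p^+ \log n/(\theta_{\min} G_{\min})}$ rearranges (after taking $C_1$ large enough) to $\sqrt{\log n / f_i} \lesssim \delta / p^+$, which directly controls the square-root summand via $\sqrt{6 f_i \log n} = f_i \sqrt{6 \log n / f_i}$. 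For the $2 \log n$ summand, the same inequality, with $C_1$ enlarged, also ensures $\log n \le f_i$ (since $\delta$ is a gap between edge densities and so is at most $O(p^+)$), which lets me upgrade $\log n / f_i \le \sqrt{\log n / f_i}$ and apply the bound again.

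There is no real obstacle here: this is a routine Chernoff/Bernstein computation. The only care needed is (i) picking the tail offsets $t$ in the right two-term form so that the deviation bound additively separates the $\log n$ and $\sqrt{f_i \log n}$ regimes, and (ii) tracking the constants in the hypothesis on $\delta$ together with the implicit $\delta \lesssim p^+$ when passing from $\log n / f_i$ to $\sqrt{\log n / f_i}$ in the $2 \log n$ term.
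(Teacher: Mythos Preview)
Your proposal is correct and follows essentially the same route as the paper: apply the two-sided Chernoff bound with the same choices of $t$, union-bound over the $n$ inliers, and then absorb both the $\sqrt{f_i\log n}$ and $\log n$ summands into $\frac{f_i\delta}{5p^+}$ using $f_i \ge q^-\theta_{\min}G_{\min}$, $p^+\asymp q^-$, and the implicit bound $\delta \lesssim p^+$ to pass from $\log n/f_i$ to $\sqrt{\log n/f_i}$. Your write-up is in fact more explicit than the paper's (which just asserts the Chernoff step and the final implication), so nothing is missing.
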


%
%

\begin{lemma} \label{lemma3}
With high probability at least $ 1-\frac{2}{n}-\frac{2r}{n^2} $, the following inequalities hold for all $ 1 \le a < b \le r $:
	\begin{eqnarray}
	& \mtx{1_{l_a}^T K_{ab} 1_{l_b} } \ge G_a G_b B_{ab} - \sqrt{6 G_a G_b B_{ab} \log n} \ge G_a G_b \left(B_{ab}-\frac{1}{25}\delta\right)\label{eq:Kabsum} \\
	& \mtx{K_{ab}1_{l_b}} \le G_b \left(B_{ab}+\frac{1}{20}\delta\right) \mtx{\theta_{(a)}}\label{eq:Kabrow}\\
	& \mtx{1_{l_a}^TK_{ab}} \le G_a \left(B_{ab}+\frac{1}{20}\delta\right) \mtx{\theta_{(b)}^T}\label{eq:Kabcol} \\
	&\lambda \mtx{d_{(a)} d_{(b)}^T} \ge \left(B_{ab}+\frac{6}{25}\delta \right) \mtx{\theta_{(a)} \theta_{(b)}^T}. \label{eq:dadb}
	\end{eqnarray}
\end{lemma}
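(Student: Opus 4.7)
The four inequalities are all concentration-type statements, but they split naturally into two groups: (\ref{eq:Kabsum})--(\ref{eq:Kabcol}), which are direct Chernoff bounds on sums of independent Bernoulli entries of $\mtx{K_{ab}}$; and (\ref{eq:dadb}), which combines Lemma~\ref{lemma2} with the lower bound on $\lambda$ supplied by (\ref{eqn:lambda}).

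For (\ref{eq:Kabsum}), I would observe that $\mtx{1}_{l_a}^\transp \mtx{K_{ab}} \mtx{1}_{l_b} = \sum_{i\in C_a^*,\,j\in C_b^*} K_{ij}$ is a sum of independent Bernoulli variables with mean $B_{ab} G_a G_b$ (using $G_a = \sum_{i\in C_a^*}\theta_i$), and apply Chernoff's inequality with deviation $t = \sqrt{6 B_{ab} G_a G_b \log n}$ to obtain the first inequality with failure probability $\le n^{-3}$. The second form then follows because $G_a G_b \ge G_{\min}^2 \ge \theta_{\min} G_{\min}$, so the assumption $\delta \ge c_0\sqrt{p^+ \log n/(\theta_{\min} G_{\min})}$ in Theorem~\ref{thm:exact} forces $\sqrt{6 B_{ab}\log n / (G_a G_b)} \le \delta/25$ for $c_0$ large enough. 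For (\ref{eq:Kabrow}) and (\ref{eq:Kabcol}), I would run the same argument row-by-row: for each $i\in C_a^*$, $\sum_{j\in C_b^*} K_{ij}$ is Bernoulli with mean $\theta_i B_{ab} G_b$, so Chernoff gives deviation $\le \theta_i G_b \cdot \frac{\delta}{20}$ once we use the lower bounds $\theta_i \ge \theta_{\min}$ and $G_b \ge G_{\min}$. A union bound over the (at most) $nr$ such rows for both (\ref{eq:Kabrow}) and (\ref{eq:Kabcol}) costs $O(r/n^2)$ in probability.

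For (\ref{eq:dadb}), the plan is to apply Lemma~\ref{lemma2} to get $d_i \ge f_i(1 - \frac{\delta}{5p^+}) = \theta_i H_a(1 - \frac{\delta}{5p^+})$ on the event of probability $1 - O(n^{-2})$, and similarly for $d_j$. Combined with the lower bound $\lambda > \frac{B_{ab}+\delta}{H_a H_b}$ from (\ref{eqn:lambda}), this yields
\begin{equation*}
\lambda d_i d_j \; \ge \; \frac{B_{ab}+\delta}{H_a H_b}\cdot \theta_i\theta_j H_a H_b\Bigl(1 - \tfrac{\delta}{5p^+}\Bigr)^{\!2} \;=\; (B_{ab}+\delta)\Bigl(1-\tfrac{\delta}{5p^+}\Bigr)^{\!2}\theta_i \theta_j.
\end{equation*}
Then elementary algebra (expanding the square, using $B_{ab}\le p^+$ and the smallness of $\delta/p^+$) verifies $(B_{ab}+\delta)(1 - \delta/(5p^+))^2 \ge B_{ab} + \frac{6}{25}\delta$, matching the constant $6/25$ in the statement. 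Finally, I would take a union bound over all $r(r-1)/2$ cluster pairs for the first three inequalities and combine with the exceptional event in Lemma~\ref{lemma2}; aggregating the individual failure probabilities yields the claimed $1 - \tfrac{2}{n} - \tfrac{2r}{n^2}$.

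The only mildly delicate step is the last algebraic verification for (\ref{eq:dadb}): the constants $\frac{1}{5}$, $\frac{1}{20}$, $\frac{1}{25}$, $\frac{6}{25}$ appearing in the four conclusions are not arbitrary but are tuned so that the subsequent arguments (constructing $\mtx{\Psi_{aa}}$ and $\mtx{\Phi_{ab}}$ with the $\frac{\delta}{10}\mtx{\theta_{(a)}\theta_{(b)}^T}$ slack) go through. Beyond that, the proof is essentially bookkeeping around Chernoff's inequality and the gap condition on $\delta$; no new probabilistic ingredient is required.
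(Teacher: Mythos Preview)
Your proposal is correct and follows essentially the same route as the paper: Chernoff's inequality applied to the total sum, the row/column sums, and then Lemma~\ref{lemma2} combined with the lower bound on $\lambda$ from~(\ref{eqn:lambda}) plus the elementary expansion $(B_{ab}+\delta)(1-\delta/(5p^+))^2 \ge B_{ab}+\tfrac{6}{25}\delta$. The only cosmetic difference is that the paper keeps the additive $2\log n$ term from the upper-tail Chernoff bound explicit in the row/column argument and absorbs it using the separate hypothesis $\delta \ge C\log n/(\theta_{\min}G_{\min})$, whereas you fold this into ``deviation $\le \theta_i G_b\cdot \delta/20$'' in one line.
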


	\begin{proof}
		\textbf{Proof of (\ref{eq:Kabsum}):}
		The entry on the $ i $-th row of and $ j $-th column $ \mtx{K} $ follows the Bernoulli distribution of mean $ \theta_i \theta_j B_{ab} $. Thus the sum of all entries of $ \mtx{K_{ab}} $ has a mean of $ G_a G_b B_{ab} $. By Chernoff's Inequality, we have 
		\begin{equation}
		\mathbb{P}\left( \mtx{1_{l_a}^T} \mtx{K} \mtx{1_{l_b}} \le G_a G_b B_{ab} - t \right) \ge e^{\frac{-t^2}{2G_a G_b B_{ab}}}.
		\end{equation}
		Let $ t = \sqrt{6G_a G_b B_{ab}\log n} $, we have with probability at least $ 1-\frac{1}{n^3} $,\[
		\mtx{1_{l_a}^T} \mtx{K} \mtx{1_{l_b}} \ge G_a G_b B_{ab} - \sqrt{6G_a G_b B_{ab}\log n}.
		\]
		Note that $ \delta \ge C  \sqrt{\frac{p^+ \log n}{\theta_{\min} G_{\min}}} \ge C  \sqrt{\frac{B_{ab} \log n}{G_a G_b}} $ hold for sufficiently large constant $ C $, we have $ \mtx{1_{l_a}^T} \mtx{K} \mtx{1_{l_b}} \ge \left(B_{ab}-\frac{1}{25}\delta\right) $, and therefore (\ref{eq:Kabsum}) holds.
		
		\textbf{Proof of (\ref{eq:Kabrow}) and (\ref{eq:Kabcol}):}
		The entry on the $ i $-th row of and $ j $-th column $ \mtx{K} $ follows the Bernoulli distribution of mean $ \theta_i \theta_j B_{ab} $. Thus the sum of $ i $-th row of $ \mtx{K_{ab}} $ has a mean of $ \theta_i G_b B_{ab} $. By Chernoff's Inequality, we have 
		\begin{equation*}
			\mathbb{P}\left(\sum\limits_{j \in C_b^*} K_{ij} \ge \theta_i G_b B_{ab}+t \right) \ge e^{\frac{-t^2}{2(\theta_i G_b B_{ab}+t/3)}}.
		\end{equation*}
		Let $ t = 2\log n + \sqrt{6\theta_i G_b B_{ab}\log n} $, we have with probability at least $ 1-\frac{1}{n^3} $,
		\[
			\sum\limits_{j \in C_b^*} K_{ij} \le \theta_i G_b B_{ab} + 2\log n + \sqrt{6\theta_i G_b B_{ab}\log n}.
		\]
		Note that $ \delta > C \frac{\log n}{\theta_{\min} G_{\min}} \ge C \frac{\log n}{\theta_{i} G_{b}} $ and $ \delta \ge C  \sqrt{\frac{p^+ \log n}{\theta_{\min} G_{\min}}} \ge C  \sqrt{\frac{B_{ab} \log n}{\theta_{i} G_{b}}} $ hold for sufficiently large constant $ C $. It follows that $ \sum\limits_{j \in C_b^*} K_{ij} \le \theta_i G_b(B_{ab}+\frac{1}{20}\delta) $, and therefore (\ref{eq:Kabrow}) holds. The bound (\ref{eq:Kabcol}) can be proved similarly.
		
		\textbf{Proof of (\ref{eq:dadb}):}
		By Lemma \ref{lemma2}, for $ i \in C_a^* $ and $ j \in C_b^* $, we have $ d_i \ge \left(1-\frac{\delta}{5p^+}\right) \theta_i H_a $ and $ d_j \ge \left(1-\frac{\delta}{5p^+}\right) \theta_j H_b $. Note that \begin{equation*}
			\begin{aligned}
			\lambda \left(1-\frac{\delta}{5p^+}\right)^2 H_a H_b &\ge \frac{B_{ab}+\delta}{H_a H_b} \left(1-\frac{\delta}{5p^+}\right)^2 H_a H_b \ge (B_{ab}+\delta) \left(1-\frac{\delta}{5B_{ab}}\right)^2\\
			&=B_{ab} +\frac{3}{5}\delta -\frac{6\delta^2}{25B_{ab}} +\frac{4\delta^3}{25B_{ab}^2} \ge B_{ab} + \frac{6}{25}\delta.
			\end{aligned}
		\end{equation*}
	It follows that $ \lambda \mtx{d_{(a)}} \mtx{d_{(a)}}^T \ge \lambda\left(1-\frac{\delta}{5p^+}\right)^2 H_a H_b \mtx{\theta_{(a)}} \mtx{\theta_{(b)}}^T \ge \left(B_{ab} + \frac{6}{25}\delta\right) \mtx{\theta_{(a)}} \mtx{\theta_{(b)}}^T$, which finishes the proof.
\end{proof}

\begin{lemma}[{\citealp[Lemma 5]{CLX2017}}]\label{lem:spectral}
	Let $ \mtx{A} = \left\{a_{ij}\right\}_{n\times n}$ be a symmetric random matrix. Moreover, suppose that $ a_{ij} $ are independent zero-mean random variables satisfying $ |a_{ij}| \le 1 $ and $ \var(a_{ij}) \le \sigma^2 $. Then with probability at least $ 1-\frac{c}{n^4} $, we have \begin{equation*}
		\| \mtx{A} \| \le C_0 \left(\sigma \sqrt{n\log n} + \log n \right)
	\end{equation*}
	for some numerical constant $ c $ and $ C_0 $.
\end{lemma}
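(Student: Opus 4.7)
The plan is to establish this spectral norm bound via the matrix Bernstein inequality applied to a symmetric decomposition of $\mtx{A}$. First I would write
\begin{equation*}
\mtx{A} = \sum_{i=1}^n a_{ii}\, \mtx{e}_i\mtx{e}_i^T + \sum_{1\le i<j\le n} a_{ij}\bigl(\mtx{e}_i\mtx{e}_j^T + \mtx{e}_j\mtx{e}_i^T\bigr),
\end{equation*}
obtaining one summand per pair $(i,j)$ with $i\le j$. By the hypothesis on $\{a_{ij}\}_{i\le j}$, the summands are independent, zero-mean, and symmetric, and each has spectral norm at most $|a_{ij}|\le 1$, so the uniform almost-sure bound is $L = 1$.

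Next I would compute the variance parameter. Using the identity $(\mtx{e}_i\mtx{e}_j^T+\mtx{e}_j\mtx{e}_i^T)^2 = \mtx{e}_i\mtx{e}_i^T+\mtx{e}_j\mtx{e}_j^T$ for $i<j$ and $(\mtx{e}_i\mtx{e}_i^T)^2 = \mtx{e}_i\mtx{e}_i^T$, the matrix $\mtx{V}$ obtained by summing the expected squares of the summands is diagonal, with $i$-th diagonal entry equal to $\sum_{j=1}^n \var(a_{ij}) \le n\sigma^2$. Hence $\|\mtx{V}\| \le n\sigma^2$. The matrix Bernstein inequality then yields, for every $t > 0$,
\begin{equation*}
\Prob\bigl\{\|\mtx{A}\| \ge t\bigr\} \;\le\; 2n \exp\!\left(-\frac{t^2/2}{n\sigma^2 + t/3}\right).
\end{equation*}
Choosing $t = C_0\bigl(\sigma\sqrt{n\log n} + \log n\bigr)$ for a sufficiently large absolute constant $C_0$ makes the exponent at most $-5\log n$ in both the sub-Gaussian regime $n\sigma^2 \gtrsim \log n$ (where the $\sigma\sqrt{n\log n}$ term dominates and balances $n\sigma^2$ in the denominator) and the sub-exponential regime (where the $\log n$ term dominates and balances $t/3$). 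The resulting probability bound is $2n \cdot n^{-5} \le 2/n^4$, which is of the claimed form $c/n^4$.

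There is no serious obstacle to this argument, since matrix Bernstein packages exactly the required concentration. The only care needed is to correctly pair the $(i,j)$ and $(j,i)$ contributions in the symmetric decomposition so that the variance parameter comes out as $n\sigma^2$ rather than a loose $2n\sigma^2$; this affects only constants and not the rate. One could alternatively use a symmetrization plus $\varepsilon$-net argument on the unit sphere together with scalar Bernstein, but the matrix Bernstein route yields the stated $\sigma\sqrt{n\log n} + \log n$ form most directly.
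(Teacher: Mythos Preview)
The paper does not prove this lemma; it is quoted verbatim from \citet[Lemma 5]{CLX2017} and invoked as a black box, so there is no in-paper argument to compare against. Your matrix-Bernstein approach is correct and is in fact the standard way such bounds are obtained: the symmetric decomposition into independent rank-one/rank-two summands is right, the variance computation giving $\|\mtx{V}\|\le n\sigma^2$ is accurate, and the two-regime analysis of the tail with $t=C_0(\sigma\sqrt{n\log n}+\log n)$ delivers the $n^{-4}$ probability as claimed.
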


\begin{lemma}
	With high probability at least $ 1 - c\frac{r}{l_{\min}^4} $, we have 
	\begin{equation} \label{eqn: norm}
	\norm{B_{aa} \mtx{\theta_{(a)} \theta_{(a)}^T} -\mtx{K_{aa}}} \le 2\log l_a + \sqrt{6 \theta_{\max}^2 l_a B_{aa} \log l_a} 
	\end{equation}
	and 
	\begin{equation} \label{eq:F2bound}
	\norm{\mtx{F_2}} \le C_0 \left( \sqrt{n \theta_{\max}^2 p^+ \log n} + \log n \right).
	\end{equation}
\end{lemma}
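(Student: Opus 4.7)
The plan for both inequalities is to apply the symmetric-matrix spectral concentration bound of Lemma \ref{lem:spectral} to appropriately centered random matrices; the routine part is identifying the correct entry magnitude and variance-proxy parameters to plug in. A minor bookkeeping step is needed to reconcile the fact that $\mtx{K}$ has zero diagonal while its expectation does not.

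For the first bound, I would write
\[
B_{aa}\mtx{\theta_{(a)}\theta_{(a)}^T} - \mtx{K_{aa}} = \mtx{M_a^\circ} + \mtx{D_a},
\]
where $\mtx{D_a} := \mathrm{diag}(B_{aa}\theta_i^2 : i \in C_a^*)$ absorbs the diagonal mismatch and $\mtx{M_a^\circ}$ is the off-diagonal remainder. The matrix $\mtx{M_a^\circ}$ is symmetric of dimension $l_a$, has zero diagonal, and its strictly-upper-triangular entries are independent, centered Bernoulli-type random variables with $|(\mtx{M_a^\circ})_{ij}| \le 1$ (since $B_{aa}\theta_i\theta_j \in [0,1]$) and $\mathrm{Var}((\mtx{M_a^\circ})_{ij}) \le B_{aa}\theta_i\theta_j \le B_{aa}\theta_{\max}^2$. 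Plugging $n \leftarrow l_a$ and $\sigma^2 \leftarrow B_{aa}\theta_{\max}^2$ into Lemma \ref{lem:spectral} gives a spectral norm bound of the stated form for $\mtx{M_a^\circ}$ with probability $1 - c/l_a^4$, and $\|\mtx{D_a}\| \le B_{aa}\theta_{\max}^2$ is manifestly dominated by the $\sqrt{\theta_{\max}^2 l_a B_{aa}\log l_a}$ term. A union bound over $a \in [r]$ then delivers the claimed failure probability $cr/l_{\min}^4$.

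For the second bound, I would observe that $\mtx{F_2}$ is itself a symmetric $n\times n$ random matrix: its block-diagonal entries are identically zero, while for $i \in C_a^*, j \in C_b^*$ with $a \neq b$ its entries are the centered Bernoulli-type variables $B_{ab}\theta_i\theta_j - (K_{ab})_{ij}$, which are bounded by $1$ in absolute value and have variance at most $p^+\theta_{\max}^2$. (There is no diagonal-correction issue here since $K_{ab}$ is genuinely zero-mean in the off-diagonal blocks.) Lemma \ref{lem:spectral} applied with $\sigma^2 \leftarrow p^+\theta_{\max}^2$ yields $\|\mtx{F_2}\| \le C_0 (\sqrt{n p^+\theta_{\max}^2 \log n} + \log n)$ with probability at least $1 - c/n^4$, which is absorbed into the weaker failure probability from the first step.

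No serious obstacle arises: the argument is a direct two-fold invocation of Lemma \ref{lem:spectral}, and the only things to double-check are that $B_{ab}\theta_i\theta_j$ stays in $[0,1]$ (so recentered entries lie in $[-1,1]$), that the deterministic diagonal correction $\mtx{D_a}$ is of smaller order than the random contribution, and that the union bound over $a$ is applied in the first inequality but not in the second. All of these are immediate under the standing parameter assumptions of Theorem \ref{thm:exact}.
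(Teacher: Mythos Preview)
Your proposal is correct and follows essentially the same approach as the paper: both arguments reduce to two applications of Lemma~\ref{lem:spectral} with $\sigma^2 = \theta_{\max}^2 B_{aa}$ (respectively $\theta_{\max}^2 p^+$), followed by a union bound over $a$ for the first inequality. In fact you are slightly more careful than the paper, which treats the diagonal entries $B_{aa}\theta_i^2 - K_{ii}$ as if they were zero-mean random variables rather than splitting off the deterministic correction $\mtx{D_a}$ as you do.
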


\begin{proof}
	Note that the element $ B_{aa}\theta_i^2 - K_{ii} $ is a random variable with zero mean and variance of $ \theta_i^2 B_{aa} \left(1-\theta_i^2 B_{aa}\right) \le \theta_{\max}^2 B_{aa} $. Therefore,the matrix $ B_{aa} \mtx{\theta_{(a)} \theta_{(a)}^T} -\mtx{K_{aa}} $ satisfies the condition of Lemma~\ref{lem:spectral} with $ \sigma = \sqrt{\theta_{\max}^2 B_{aa}} $. Thus, with probability at least $ 1-\sum_{a=1}^{r} \frac{c}{l_a^4}$, we have \begin{equation}
		\norm{B_{aa} \mtx{\theta_{(a)} \theta_{(a)}^T} -\mtx{K_{aa}}} \le C_0 \left(\sqrt{\theta_{\max}^2 B_{aa} l_a \log l_a} + \log l_a\right)
	\end{equation}
	for some numerical constant $ c$ and $ C_0 $. 
	
	By a similar argument, we can prove that (\ref{eq:F2bound}) holds.
\end{proof}

\appendixpage
\appendix

\section{Proof of Lemma~\ref{lemma:1}}
\label{sec:proof_lemma:1}

	Consider the $ j $-th row of the matrix $ \alpha \diag\left( \mtx{d}_{(r+1)}^* \right) - \mtx{W} $. The sum of absolute values of the diagonal entries is at most $ m-1 $, whereas the absolute value of the corresponding diagonal entry is at least $ \alpha d_{n+j}^* - 1 $. Notice that $ \alpha d_{n+j}^* \ge \left(C\max\limits_{1\le a \le r} \frac{m}{H_a}\right)H^+ \ge Cm$. Therefore, for sufficiently large constant $ C $ (actually we only require $ C > 1 $), we can prove that the diagonal entry is larger than the sum of absolute values of the diagonal entries. Gershgorin Theorem \citep{horn2012} states that a matrix $ \mtx{A} = {a_{ij}}_{n\times n} $ is a positive definite matrix if  $ |a_{ii}| > \sum_{j\neq i}a_{ij} $ for all $ i=1,2,\cdots, n $. Therefore, we obtain that the matrix
	$ \alpha \diag\left( \mtx{d}_{(r+1)}^* \right) - \mtx{W} $ is a positive definite matrix. On the other hand, it is clear that $ \lambda \mtx{d}_{(r+1)} \mtx{d}_{(r+1)}^T $ is a positive definite matrix. We conclude that the matrix $ \mtx{\tilde{W}} := \alpha \diag\left(\mtx{d^{*}_{(r+1)}}\right) + \lambda \mtx{d_{(r+1)}}\mtx{d_{(r+1)}^T} - \mtx{W}$ is a positive definite matrix. This implies that the objective function of the optimization problem (\ref{eq:dual}) is strongly convex. The feasible set of the constraint (\ref{eq:dual}) is convex and compact, so the optimal solution exists uniquely.
	
    It is easy to see that there exist feasible solutions to the optimization problem (\ref{eq:dual}) with all inequalities satisfied strictly. Therefore, by the constraint qualification under the Slater’s condition, we know that the solution $ \vct{x_1}\cdots, \vct{x_r}$ must satisfy the KKT condition in (\ref{eq:KKT1}), (\ref{eq:KKT2}), and (\ref{eq:KKT3}).
    
    Since $ \mtx{\tilde{W} x_a} + \mtx{\tilde{Z}_a^T 1_{l_a}} = \mtx{\beta_a - \Xi x_a} $ and $ \langle \mtx{x}_a, \mtx{\beta}_a \rangle = 0 $, we have \begin{equation}
	\mtx{x}_a^T \left( \mtx{\tilde{W}} + \mtx{\Xi} \right) \mtx{x}_a = - \mtx{x}_a^T \mtx{\tilde{Z}_a^T 1_{l_a}} \le \mtx{1}_{l_a}^T \mtx{Z}_a^T \mtx{1_{l_a}} \le ml_a.
	\end{equation}
	Because $ \mtx{\Xi} $ is a non-negative diagonal matrix, $ \mtx{\tilde{W} + \Xi} $ is positive definite. By Cauchy-Schwarz Inequality, for all $ 1 \le a, b \le r $, we have \begin{equation}
	\mtx{x}_a^T \left( \mtx{\tilde{W} + \Xi} \right)\mtx{x}_b \le \left[\mtx{x}_a^T \left( \mtx{\tilde{W} + \Xi} \right)\mtx{x}_a\right]^{\frac{1}{2}}\left[\mtx{x}_b^T \left( \mtx{\tilde{W} + \Xi} \right)\mtx{x}_b\right]^{\frac{1}{2}} \le m\sqrt{l_a l_b}.
	\end{equation}
	
	Notice that equation (\ref{eq:KKT1}) is equivalent to \begin{equation}
	\left(\alpha \diag\left(\mtx{d^{*}_{(r+1)}}\right) + \lambda \mtx{d_{(r+1)}}\mtx{d_{(r+1)}^T} - \mtx{W} + \mtx{\Xi} \right) \mtx{x_a} = \mtx{\beta_a} - \lambda \tilde{d_a} \mtx{d_{(r+1)}} + \mtx{Z_a^T}\mtx{1_{l_a}}.
	\end{equation}
	Taking the $ j $-th row yields and using the non-negative property of $ \mtx{W} $ and $ \mtx{x_a} $, we have 
	\begin{equation}
	\begin{aligned}
	\alpha d^{*}_{n+j} x_{a_j} + \lambda d_{n+j} \sum\limits_{s=1}^{m} d_{n+s}x_{a_s}  + \xi_j x_{a_j} + \lambda \tilde{d}_a d_{n+j} &= \beta_{a_j} + \mtx{e_j^T Z_a^T 1_{l_a}} + \sum\limits_{s=1}^{m}W_{js}x_{a_s} \\
	&\ge \beta_{a_j} + \mtx{e_j^T Z_a^T 1_{l_a}}.
	\end{aligned}	
	\end{equation}
	
	Finally, since $ x_{a_j}\beta_{a_j} = 0 $, if $ \beta_{a_j} > 0 $ (thus $ x_{a_j} = 0 $), we have \begin{equation}
	0 \le \beta_{a_j} \le \lambda d_{n+j} \left(\tilde{d}_{r+1} - d_{n+j}\right) + \lambda\tilde{d}_a d_{n+j},
	\end{equation}
	or equivalently
	\begin{equation}
	\mtx{0} \le \mtx{\beta_a} \le \lambda \left(\tilde{d}_a + \tilde{d}_{r+1}\right) \mtx{d_{(r+1)}}.
	\end{equation}

\end{document}